\pdfoutput=1
\documentclass[11pt]{article}
% NeurIPS style
% \PassOptionsToPackage{compress, numbers}{natbib}
% \usepackage[preprint]{common/neurips_2022}

% Linderman lab style
\usepackage[compress,numbers]{natbib}
\usepackage[left=1.25in, right=1.25in, top=1.25in, bottom=1.25in]{geometry}
\usepackage{amsmath}
\usepackage[scaled=0.92]{PTSans}
\usepackage{xfrac}

% Standard NeurIPS packages.
\usepackage[utf8]{inputenc} % allow utf-8 input
\usepackage[T1]{fontenc}    % use 8-bit T1 fonts
\usepackage[colorlinks]{hyperref}
\hypersetup{citecolor=MidnightBlue}
\hypersetup{linkcolor=black}
\hypersetup{urlcolor=MidnightBlue}
\usepackage{url}            % simple URL typesetting
\usepackage{booktabs}       % professional-quality tables
\usepackage{nicefrac}       % compact symbols for 1/2, etc.
\usepackage{microtype}      % microtypography
\usepackage[dvipsnames]{xcolor}         % colors
\usepackage{colortbl}
\usepackage[pdftex]{contour}

% A.W. packages.
\usepackage{graphicx}
\usepackage[labelfont=bf]{caption}
\usepackage{subcaption}
\usepackage{lipsum}
\usepackage{bbm}
\usepackage{cancel}
\usepackage{wrapfig}
\usepackage{adjustbox}
\usepackage{multirow}
\usepackage{geometry}
\usepackage{pdflscape}
\usepackage{changepage}

% Math packages.
\usepackage{amsfonts}
\usepackage{amsmath}
\usepackage{amssymb}
\usepackage{amsthm}

% \usepackage{cleveref}

% Algorithm packages.
% \usepackage{algorithm}
% \usepackage{algorithmic}

\usepackage{algorithm}
\usepackage{algorithmicx,tikz}
\usepackage{algcompatible}
\usepackage[noend]{algpseudocode}
\usepackage{setspace}
\usepackage{bm}

\algnewcommand{\LineComment}[1]{\State \(\#\) #1}

%% ------------------------------------------------------------------------------ MATH

\renewcommand{\d}{\mathrm{\ d}}

\newcommand{\bx}{\mathbf{x}}
\newcommand{\by}{\mathbf{y}}

\newcommand{\cx}{\mathcal{X}}
\newcommand{\cy}{\mathcal{Y}}

\newcommand{\cn}{\mathcal{N}}
\newcommand{\BX}{\mathbf{X}}

\newcommand{\BA}{\mathbf{A}}

\DeclareMathOperator*{\argmax}{arg\,max}

\newcommand{\btheta}{\boldsymbol\theta}
\newcommand{\bphi}{\boldsymbol\phi}
\newcommand{\bpsi}{\boldsymbol\psi}
\newcommand{\bmu}{\boldsymbol\mu}
\newcommand{\BTheta}{\boldsymbol\Theta}

\newcommand{\BPsi}{\boldsymbol\Psi}

% Notation from VSMC.

\newcommand{\E}{\mathbb{E}}

% Notation from FIVO.

%THEOREMS (FIVO)
\theoremstyle{plain}% default
\newtheorem{thm}{Theorem}

\newtheorem{prop}[thm]{Proposition}

\newtheorem*{thm*}{Theorem}
\newtheorem*{prop*}{Proposition}

\DeclareRobustCommand{\rchi}{{\mathpalette\irchi\relax}}
\newcommand{\irchi}[2]{\raisebox{1.1\depth}{$#1\chi$}} % inner command, used by \rchi

\contourlength{0.001em}
\newcommand{\sixo}{SI${\rchi}$\kern-0.25ptO}
\newcommand{\bsixo}{SI$\bm{\rchi}$\kern-0.5ptO}

\usepackage{todonotes}

\title{\sixo: Smoothing Inference with Twisted Objectives}

\author{
  Dieterich Lawson\thanks{Equal contribution.},
  \, Allan Ravent\'{o}s\footnotemark[1],
  \, Andrew Warrington\footnotemark[1],
  \, Scott Linderman \vspace{.5em} \\
  \texttt{\{jdlawson, aravento, awarring, scott.linderman\}@stanford.edu} \vspace{.5em} \\
  Stanford University
}

\begin{document}

\maketitle

\begin{abstract}
Sequential Monte Carlo (SMC) is an inference algorithm for state space models that approximates the posterior by sampling from a sequence of target distributions. The target distributions are often chosen to be the \textit{filtering} distributions, but these ignore information from future observations, leading to practical and theoretical limitations in inference and model learning.
We introduce \emph{SIXO}, a method that instead learns targets that approximate the \emph{smoothing} distributions, incorporating information from all observations. The key idea is to use density ratio estimation to fit functions that warp the filtering distributions into the smoothing distributions. We then use SMC with these learned targets to define a variational objective for model and proposal learning. SIXO yields provably tighter log marginal lower bounds and offers significantly more accurate posterior inferences and parameter estimates in a variety of domains.
\end{abstract}

\section{Introduction}
\label{sec:intro}

In this work we consider model learning and approximate posterior inference in probabilistic state space models.
Sequential Monte Carlo (SMC) is a general-purpose method for these problems~\citep{doucet2009tutorial, naesseth2019elements,del2004feynman} that produces an unbiased estimate of the marginal likelihood as well as latent state trajectories (i.e. \textit{particles}) that can be used to approximate posterior expectations.
SMC can facilitate model learning via expectation-maximization or direct maximization of the marginal likelihood estimate~\citep{andrieu2004particle,gu2015neural}.
It can also be cast in a variational framework~\cite{blei2017variational,wainwright2008graphical} as a rich family of approximate posterior distributions that can be fit using stochastic gradient ascent and modern automatic differentiation methods~\citep{maddison2017filtering, naesseth2018variational, le2017auto, lawson2018twisted, lawson2019energy}.

The quality of SMC's marginal likelihood and posterior estimates is driven by two design decisions: the choice of \textit{proposal distributions} and \textit{target distributions}. The proposal distributions specify how particles propagate from one time step to the next while the target distributions specify how those particles are weighted and which ones survive to future time steps. The most common SMC variant, filtering SMC, sets the targets to the \textit{filtering distributions}, the conditional distributions over latent states~$\bx_{1:t} = (\bx_1, \ldots, \bx_t)$ given observations~$\by_{1:t}=(\by_1 , \ldots , \by_t)$. The central issue is that the filtering distributions do not incorporate information from future observations $\by_{t+1:T}$.

\begin{figure}[t]
    \centering
    % \subcaptionbox{The filtering (left) and smoothing (right) distributions.\label{fig:banner:dists}}{
    %     \includegraphics[width=\textwidth]{figures/gdd/gdd_distributions_alt.pdf}
    %     \vspace*{-0.25cm}
    % }
    % \subcaptionbox{Particle lineages under FIVO (left) and SIXO (right).    \label{fig:banner:lineages}}{
    %     \includegraphics[width=0.49\textwidth]{figures/gdd/gdd_trajectory_fivo_alt.pdf}
    %     ~%
    %     \includegraphics[width=0.49\textwidth]{figures/gdd/gdd_trajectory_sixo_alt.pdf}
    %     \vspace*{-0.25cm}
    % }
    \vspace{-1mm}
    \begin{subfigure}{\textwidth} % this 'subfigure' env. has no visible content
            \refstepcounter{subfigure}\label{fig:banner:dists:filtering}
            \refstepcounter{subfigure}\label{fig:banner:dists:smoothing}
            \refstepcounter{subfigure}\label{fig:banner:lineages:fivo}
            \refstepcounter{subfigure}\label{fig:banner:lineages:sixo}
    \end{subfigure}%

    \includegraphics[width=\textwidth]{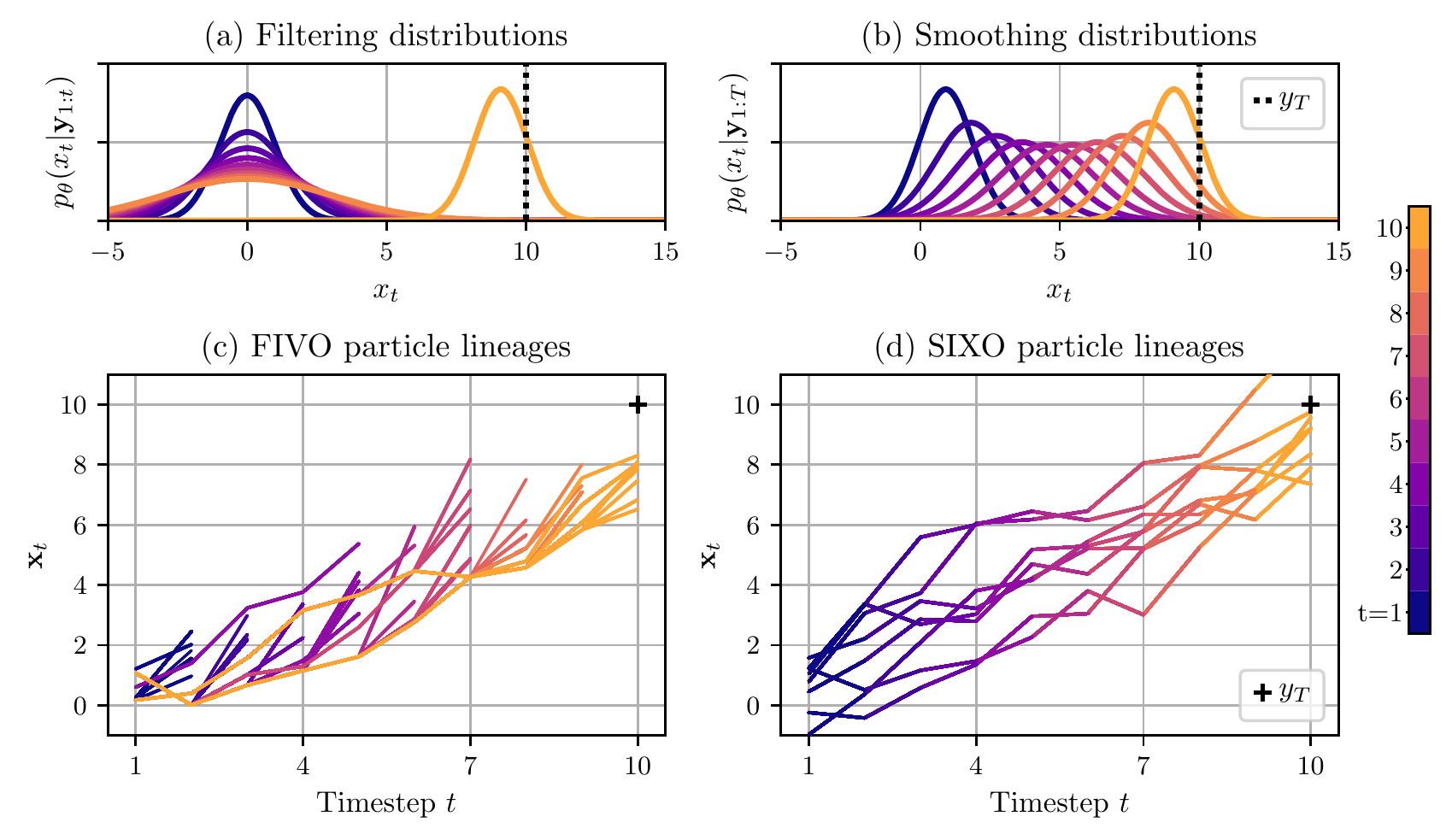}
    \caption{Theoretical and empirical target distributions for a Gaussian random walk with a single observation~$y_{T}=10$ and $\mathbf{y}_{1:T-1} = \varnothing$. \textbf{(\protect\subref{fig:banner:dists:filtering})} For $t=1,\ldots,9$ the filtering distributions reduce to a series of mean-zero Gaussians. At $t=T=10$, the filtering distribution incorporates the observation $y_T$, resulting in a sudden shift and particle death. \textbf{(\protect\subref{fig:banner:dists:smoothing})} In contrast, the smoothing distributions steadily shift towards the observation, matching the posterior perfectly.
    \textbf{(\protect\subref{fig:banner:lineages:fivo})} The proposal learned by a previous method, FIVO \cite{maddison2017filtering,naesseth2018variational,le2017auto}, exploits smoothing information to propose particles upwards towards the observed value. However, FIVO is based on filtering SMC which ``resists'' this by resampling particles back towards the prior, resulting in particle degeneracy. \textbf{(\protect\subref{fig:banner:lineages:sixo})} SIXO's proposal also leverages smoothing information, but proposed particles are preserved by the learned target distributions.}
    \label{fig:banner}
    \vspace{-.25cm}
\end{figure}

Figure~\ref{fig:banner} illustrates why setting the target distributions to the filtering distributions can be problematic.
In this example, the latent states follow a simple Gaussian random walk, but only the last step is observed.
Thus, the filtering distributions reduce to the prior, a series of mean zero Gaussians shown in Figure~\ref{fig:banner:dists:filtering}. 
If the observation is far from the prior, the filtering distribution suddenly jumps at time $T=10$.
This is a recipe for disaster in SMC: the particles at time $T-1$ will be distributed according to a mean zero Gaussian and very few will survive to the next time step, causing the variance of the SMC estimator to explode. Even if the proposals incorporate smoothing information, using filtering targets can cause particle degeneracy by resampling away high-quality particles, as seen in Figure \ref{fig:banner:lineages:fivo}.

%Figure \ref{fig:banner:lineages} shows particle lineages from a prior method called filtering variational objectives (FIVO) that fits model and proposal parameters using filtering SMC \cite{maddison2017filtering,naesseth2018variational,le2017auto}. Even though FIVO proposals learn to incorporate smoothing information, filtering SMC resamples away the high-quality particles, resulting in particle degeneracy and high variance (Figure \ref{fig:banner:lineages}, left).

Suppose instead that the target distributions were the \textit{smoothing distributions}---the conditional distributions over latents $\bx_{1:t}$ given \textit{all} observations $\by_{1:T}$. 
Figure~\ref{fig:banner:dists:smoothing} shows the smoothing distributions for the simple Gaussian random walk.
Unlike the filtering distributions, the smoothing distributions shift steadily toward the observation over time.
These slow, smooth changes are ideal for SMC: Figure \ref{fig:banner:lineages:sixo} shows many particles surviving from one step to the next, resulting in a low-variance SMC estimator.

In practice we do not have access to the smoothing distributions---if we did, there would be no need for SMC! Here, we introduce a new method called SIXO: Smoothing Inference with Twisted Objectives. SIXO provides a unified approach for learning model and proposal parameters, as well as a set of \textit{twisting functions} that warp the filtering distributions into targets that better approximate the smoothing distributions~\cite{whiteley2014twisted}. Like its predecessor FIVO~\citep{maddison2017filtering, naesseth2018variational, le2017auto}, SIXO uses a variational approach, deriving a lower bound to the marginal likelihood.
Unlike its predecessor, we prove that the SIXO bound can become tight, even with finitely many particles.

The key challenge with SIXO is learning the twisting functions.
We find that a simple density ratio estimation approach works best, and we propose an algorithm that interleaves twist updates with updates to the model and proposal. Thus, SIXO offers a means of jointly learning model parameters, SMC proposals, and targets for accurate posterior inference.

Finally, we give empirical evidence to support our theoretical claims. 
Across a range of experiments with a Gaussian diffusion, a stochastic volatility model of currency exchange rates, and a Hodgkin-Huxley model of membrane potential in a neuron, SIXO consistently outperforms FIVO and related methods.
We dissect these results to illustrate how learning better targets enables more effective posterior inference and model learning.
\section{Background}
\label{sec:background}

Consider modeling sequential data $\by_{1:T} \in \cy^T$ using latent variables $\bx_{1:T} \in \cx^T$ with Markovian structure, and let the joint distribution factorize as
\begin{align}
    p_{\btheta} ( \bx_{1:T}, \by_{1:T} ) = p_{\btheta} ( \bx_1 ) p_{\btheta} ( \by_1 \mid \bx_1 ) \prod_{t=2}^T p_{\btheta} ( \bx_t \mid \bx_{t-1} ) p_{\btheta} ( \by_t \mid \bx_{t} ) \label{equ:ssm}
\end{align}
with global parameters $\btheta \in \Theta$. We further assume that the conditional distributions $p_{\btheta}(\bx_t \mid \bx_{t-1})$ and $p_{\btheta}(\by_t \mid \bx_t)$ may depend nonlinearly on $\bx_{t-1}$ and $\bx_t$, respectively.

The marginal likelihood and posterior for this model class are not readily available from the joint distribution due to the intractable integral over the latents $\bx_{1:T}$, i.e.
\[
    p_{\btheta}(\by_{1:T}) = \int_{\cx^{T}}  p_{\btheta} ( \by_{1:T}, \bx_{1:T} ) \d \bx_{1:T}, 
\]
cannot easily be computed due to the form of the conditional distributions.

\subsection{Sequential Monte Carlo} 
Sequential Monte Carlo is an algorithm for inference in state-space models that approximates the posterior $p_{\btheta}(\bx_{1:T}\mid\by_{1:T})$ with a set of $K$ weighted particles $\bx_{1:T}^{1:K}$. These particles are constructed by approximately sampling from a sequence of target distributions $\{\pi_t(\bx_{1:t})\}_{t=1}^T$, with the intuition that sampling from a series of distributions that gradually approach the posterior is easier than attempting to sample it directly. The targets are often only available up to an unknown normalizing constant $Z_t$, so SMC uses the unnormalized targets $\{\gamma_t(\bx_{1:t})\}_{t=1}^T$ which correspond to the normalized targets via $\pi_t(\bx_{1:t}) = \gamma_t(\bx_{1:t})/Z_t$. 

SMC repeats three steps: First, a set of latents are sampled from a proposal distribution $q_{\btheta}(\bx^k_t \mid \bx^k_{t-1}, \by_{1:T})$ conditional on the current particles $\bx_{1:t-1}^{1:K}$. Then, each particle is weighted using the unnormalized target $\gamma_t(\bx_{1:t})$ to form an empirical approximation of the normalized target distribution. Finally, new particle trajectories $\bx_{1:t}^{1:K}$ are drawn from this approximation to the normalized target.

Ideally the target distributions smoothly approach the posterior so that sampling from the target at time $t+1$ is easy given samples from the target at time $t$. However, as long as mild technical conditions are met and ${\gamma_T(\bx_{1:T}) \propto p_{\btheta}(\bx_{1:T}, \by_{1:T})}$, SMC returns a consistent and unbiased estimate of the marginal likelihood~$p_{\btheta}(\by_{1:T})$ and a set of weighted particles approximating the posterior~$p_{\btheta}(\bx_{1:T} \mid \by_{1:T})$~\citep{doucet2009tutorial, naesseth2019elements,del2004feynman}. For more details see Appendix \ref{app:sec:background:smc} and for a thorough treatment of SMC see~\citep{doucet2009tutorial, naesseth2019elements,del2004feynman}.

\subsection{Filtering SMC and Model Learning}
\label{sec:background:filtsmc}

The most commonly-used SMC algorithm is filtering SMC, which sets the normalized targets to the \emph{filtering} distributions, i.e. $\pi_t(\bx_{1:t}) = p_{\btheta}(\bx_{1:t} \mid \by_{1:t})$ and $\gamma_t(\bx_{1:t}) \propto p_{\btheta}(\bx_{1:t}, \by_{1:t})$. Let $\widehat{Z}_\mathrm{FSMC}(\btheta, \by_{1:T})$ be the marginal likelihood estimator returned from running filtering SMC with proposal distributions $\{q_{\btheta}(\bx_t \mid \bx_{1:t-1}, \by_{1:t})\}_{t=1}^T$ which may share parameters with $p_{\btheta}$.

Previous work used filtering SMC to fit model parameters by ascending a lower bound on the log marginal likelihood called a \emph{filtering variational objective} (FIVO) \cite{maddison2017filtering,naesseth2018variational,le2017auto}. The FIVO bound is derived using Jensen's inequality and the unbiasedness of $\widehat{Z}_\mathrm{FSMC}$,
\begin{equation}
\mathcal{L}_\mathrm{FIVO}(\btheta, \by_{1:T}) \triangleq
\mathbb{E}[ \log \widehat{Z}_\mathrm{FSMC}(\btheta, \by_{1:T})] \leq
\log \mathbb{E}[\widehat{Z}_\mathrm{FSMC}(\btheta, \by_{1:T})] = 
\log p_{\btheta}(\by_{1:T}),\label{eq:methods:fivo}
\end{equation}
and is optimized using stochastic gradient ascent in $\btheta$ \cite{maddison2017filtering, naesseth2018variational, le2017auto, mnih2016variational}.

\subsection{Smoothing SMC via Twisting Functions}
\label{sec:background:smooth}
The main disadvantage of filtering SMC is that the filtering distributions only condition on observations up to the current timestep $t$, ignoring future observations $\by_{t+1:T}$. This creates situations where future observations are highly unlikely given the current latent trajectories, which in turn causes particle death, high variance in the normalizing constant estimator, and poor inference and model learning~\citep{maddison2017filtering,whiteley2014twisted,briers2010smoothing}. Performing smoothing SMC would resolve this issue by choosing the \emph{smoothing} distributions as targets, i.e. $\pi_t(\bx_{1:t}) = p_{\btheta}(\bx_{1:t} \mid \by_{1:T})$ and $\gamma_t(\bx_{1:t}) \propto p_{\btheta}(\bx_{1:t}, \by_{1:T})$. Unfortunately, $p_{\btheta}(\bx_{1:t}, \by_{1:T})$ is not readily available from the model and computing it is roughly as hard as the original inference problem.

However, $p_{\btheta}(\bx_{1:t}, \by_{1:T})$ factors into the product of the filtering distributions, $p_{\btheta}(\bx_{1:t}, \by_{1:t})$, and the \textit{lookahead} distributions, $p_{\btheta}(\by_{t+1:T} \mid \bx_t)$ (Appendix \ref{app:sec:background:twist-factor}). If the lookahead distributions can be well-approximated by a series of ``twisting'' functions \cite{whiteley2014twisted}, $\{r(\by_{t+1:T}, \bx_t)\}_{t=1}^T$, then running SMC with targets $\gamma_t(\bx_{1:t}) = p_{\btheta}(\bx_{1:t}, \by_{1:t})r(\by_{t+1:T}, \bx_t)$ would approximate smoothing SMC. In this sense, the lookahead distributions are optimal twisting functions \cite{whiteley2014twisted,guarniero2017iterated}.

Different twisting functions yield different SMC methods such as auxiliary particle filters and twisted particle filters~\citep{whiteley2014twisted, pitt1999filtering}. However, as long as the final unnormalized target $\gamma_T(\bx_{1:T})$ is proportional to $p_{\btheta}(\bx_{1:T}, \by_{1:T})$ and regularity conditions are met, SMC will produce an unbiased estimate of the marginal likelihood regardless of the choice of twisting functions~\citep{doucet2009tutorial,del2004feynman,lindsten2018graphical}. Instead, the quality of the twisting functions affects the variance of SMC's marginal likelihood estimate.
\section{SIXO: Model Learning with Smoothing SMC}
\label{sec:methods}

Our goal is to fit models by optimizing a lower bound on their log marginal likelihood constructed using smoothing SMC. To construct the lower bound, fix $r_{\bpsi}(x_T) = 1$ and let $\widehat{Z}_\mathrm{SIXO}(\btheta, \bpsi, \by_{1:T})$ be the marginal likelihood estimator returned from running SMC with unnormalized targets $\{ p_{\btheta}(\bx_{1:t}, \by_{1:t})r_{\bpsi}(\by_{t+1:T}, \bx_t)\}_{t=1}^T$ and proposal distributions $\{q_{\btheta}(\bx_t \mid \bx_{1:t-1}, \by_{1:T})\}_{t=1}^T$. Because the $T$\textsuperscript{th} unnormalized target is $p_{\btheta}(\bx_{1:T}, \by_{1:T})$, $\widehat{Z}_\mathrm{SIXO}$ will be an unbiased estimator of the marginal likelihood $p_{\btheta}(\by_{1:T})$ \citep{doucet2009tutorial,del2004feynman}. This implies via Jensen's inequality that
\begin{equation}
\begin{aligned}
    \mathcal{L}_\mathrm{SIXO}(\btheta, \bpsi, \by_{1:T}) &\triangleq \mathbb{E}\left[\log \widehat{Z}_\mathrm{SIXO}(\btheta, \bpsi, \by_{1:T})\right] \\
    &\leq \log \mathbb{E}\left[ \widehat{Z}_\mathrm{SIXO}(\btheta, \bpsi, \by_{1:T})\right]  = \log p_{\btheta}(\by_{1:T})  \label{equ:sixo_bound}
\end{aligned}
\end{equation}
i.e. $\mathcal{L}_\mathrm{SIXO}(\btheta, \bpsi, \by_{1:T})$ is a lower bound on the log marginal likelihood $\log p_{\btheta}(\by_{1:T})$ \cite{mnih2016variational}.

\subsection{The Functional Form of the Twists}
\label{sec:methods:twist-functional-form}

The structure of the lookahead distributions $p_{\btheta}(\by_{t+1:T} \mid \bx_t)$ suggests a functional form for $r_{\bpsi}$ that accepts a single latent $\bx_t$ and produces distributions over all future observations $\by_{t+1:T}$. Because the twists will be evaluated once per particle and timestep in an SMC sweep, this functional form would lead to an algorithm with $O(T^2)$ complexity. To reduce the complexity, we consider two methods: fixed-lag twisting and backwards twisting.% \cite{lawson2018twisted, guarniero2017iterated, lin2013lookahead, heng2020controlled, bernardo1999fixed}. 
\\\\
\textbf{Fixed-lag twisting} approximates the full lookahead distribution $p_{\btheta}(\by_{t+1:T} \mid \bx_t)$ using a fixed window of $L$ observations, i.e. it models $p_{\btheta}(\by_{t+1:t+L} \mid \bx_t)$ \cite{pitt1999filtering,bernardo1999fixed,lin2013lookahead}. We define the fixed-lag twisting functions $\{r_{\bpsi}(\by_{t+1:t+L}, x_t)\}_{t=1}^{T-1}$ as a sequence of functions which accept $\bx_t \in \mathcal{X}$ and produce a distribution over $\by_{t+1:t+L} \in \mathcal{Y}^{L}$. This reduces the computational complexity to $O(TL)$ at the cost of only looking at $L$ observations.

In our experiments we use an $L=1$ twist that scores the next observation by approximating the one-step lookahead
\begin{align}
p_{\btheta}(\by_{t+1}\mid \bx_t) = \int p_{\btheta}(\by_{t+1} \mid \bx_{t+1}) p_{\btheta}(\bx_{t+1} \mid \bx_t)\d\bx_{t+1}
\end{align}
with Gauss-Hermite quadrature \cite{abramowitz1964handbook}. We refer to this as the ``quadrature twist''.
\\\\
\textbf{Backwards twisting} is motivated by rewriting the lookahead distributions using Bayes' rule,
\begin{align}
p_{\btheta}(\by_{t+1:T} \mid \bx_t) = \frac{p_{\btheta}(\bx_t \mid \by_{t+1:T}) \, p_{\btheta}(\by_{t+1:T})}{p_{\btheta}(\bx_t)} \propto \frac{p_{\btheta}(\bx_t \mid \by_{t+1:T})}{p_{\btheta}(\bx_t)}, \label{equ:methods:bwd_ratio}
\end{align}
dropping terms independent of $\bx_t$ because the twisting functions will be used to score particles in SMC. Thus, we need only approximate $p_{\btheta}(\bx_t \mid \by_{t+1:T})/p_{\btheta}(\bx_t)$. The numerator $p_{\btheta}(\bx_t \mid \by_{t+1:T})$ is the reverse of the lookahead distributions---it is a distribution over a single latent conditioned on future observations. This makes it possible to parameterize the twists using a recurrent function approximator (e.g. a recurrent neural network or RNN) run backwards across the observations $\by_{1:T}$ to produce twist values for each timestep. 

Specifically, we define the backwards twists $\{r_{\bpsi}(\by_{t+1:T}, \bx_t)\}_{t=1}^{T-1}$ as a sequence of positive, integrable, real-valued functions $\mathcal{Y}^{T-t} \times \mathcal{X} \to \mathbb{R}_+$ with parameters $\bpsi \in \mathbf{\Psi}$. 
Parameterizing backward twists with a recurrent function approximator results in $O(T)$ time complexity and allows the twist to condition on all future observations, making backwards twisting preferable to fixed-lag twisting.

\subsection{Learning Twists}
\label{sec:methods:learning}

\paragraph{Ascending the Unified Objective} One way to fit the twists, proposal, and model is to ascend $\mathcal{L}_\mathrm{SIXO}$ in the parameters of $p_{\btheta}, q_{\btheta}$, and $r_{\bpsi}$, similar to FIVO~\cite{maddison2017filtering, naesseth2018variational, le2017auto}. 
The gradients of this objective include score-function terms that arise from the discrete resampling steps in SMC. We refer to ascending $\mathcal{L}_\mathrm{SIXO}$ with these unbiased gradients as SIXO-u. Because the resampling gradient terms have high variance, SIXO-u is impractical for complex settings \cite{maddison2017filtering, lawson2018twisted}. For a detailed discussion and derivation of the gradient, see Appendix \ref{app:sec:methods:unified-grads}.

\paragraph{Density Ratio Estimation}
\label{sec:methods:dre}

\begin{figure}
\algrenewcommand\algorithmicindent{0.75em}
\centering
\begin{algorithm}[H]
    \caption{SIXO-DRE}
    \label{algo:sixo-dre}
    \begin{spacing}{1.1}
    \begin{algorithmic}[1]
    
    \Procedure{SIXO-DRE}{$\by_{1:T}$, $\btheta_0$, $\bpsi_0$, $S$, $N$, $K$}
        \For {$s=1,\ldots, S$}
            \State $\bpsi_s = $\Call{TWIST-UPDATE}{$\btheta_{s-1}, \bpsi_{s-1}, N$}
            \State $\btheta_s = $\Call{MODEL-UPDATE}{$\by_{1:T}, \btheta_{s-1}$, $\bpsi_{s}$, $N$, $K$}
        \EndFor
        \vspace{-1mm}
    \State \Return $\btheta_S, \bpsi_S$
    \EndProcedure
    \vspace{1mm}    
    \Procedure{TWIST-UPDATE}{$\btheta, \bpsi_0, N$}
        \For {$i=1,\ldots,N$}
            \State $\tilde{\bx}_{1:T} \sim p_{\btheta}(\bx_{1:T})$
            \State $\bx_{1:T}, \by_{1:T} \sim p_{\btheta}(\bx_{1:T}, \by_{1:T})$
            \State $\mathcal{L}_\mathrm{DRE}(\bpsi)= \frac{1}{T-1} \sum_{t=1}^{T-1} \log \sigma(\log r_{\bpsi}(\by_{t+1:T}, \bx_t)) + \log (1 - \sigma(\log r_{\bpsi}(\by_{t+1:T}, \tilde{\bx}_t)))$
            \State Compute $\bpsi_i$ using the gradients of $\mathcal{L}_\mathrm{DRE}$ evaluated at $\bpsi_{i-1}$
        \EndFor
        \vspace{-1mm}
    \State \Return $\bpsi_N$
    \EndProcedure
    \vspace{1mm}
    \Procedure{MODEL-UPDATE}{$\by_{1:T}, \btheta_0, \bpsi, N, K$}
        \For {$i=1,\ldots,N$}
            \State $\widehat{Z}_\mathrm{SIXO}(\btheta)$ = \Call{SMC}{$\{ p_{\btheta}(\bx_{1:t}, \by_{1:t})r_{\bpsi}(\by_{t+1:T}, \bx_t)\}_{t=1}^T$, $\{q_{\btheta}(\bx_t \mid \bx_{t-1}, \by_{1:T})\}_{t=1}^T$, K}
            \State Compute $\btheta_i$ using the biased gradients of $\widehat{Z}_\mathrm{SIXO}$ evaluated at $\btheta_{i-1}$
        \EndFor
        \vspace{-1mm}
    \State \Return $\btheta_N$
    \EndProcedure
    \vspace{1mm}
    \Procedure{SMC}{$\{\gamma_t(\bx_{1:t})\}_{t=1}^T$, $\{q_{\btheta}(\bx_t \mid \bx_{t-1}, \by_{1:T})\}_{t=1}^T$, K}
      \State See Algorithm \ref{app:algo:smc} in Appendix \ref{app:sec:background:smc}.
    \EndProcedure
    \vspace{-1mm}
    \end{algorithmic}
    \end{spacing}
\end{algorithm}
\vspace{-4mm}
\end{figure}

Note that the optimal backwards twist is proportional to the ratio of a ``backwards message'' $p_{\btheta}(\bx_t \mid \by_{t:1:T})$ and the latent marginal $p_{\btheta}(\bx_t)$ (Equation \ref{equ:methods:bwd_ratio}). Thus, we can learn the backwards twist using density ratio estimation (DRE) \cite{mohamed2016learning, sugiyama2012density}.

DRE via classification estimates the ratio of two densities $a(x)/b(x)$ by training a classifier to distinguish between samples from $a$ and $b$. If such a classifier is trained using the logit link function, then its raw output will approximate $\log a(x) - \log b(x)$ up to a constant \cite{sugiyama2012density}. Using this approach, we interpret $\log r_{\bpsi}(\by_{t+1:T},\bx_t)$ as the logit of a Bernoulli classifier and train it to distinguish between samples from $p_{\btheta}(\bx_t, \by_{t+1:T})$ and $p_{\btheta}(\bx_{t})p_{\btheta}(\by_{t+1:T})$, which are available from the model. When trained in this way, $\log r_{\bpsi}(\by_{t+1:T}, \bx_t)$ will approximate $\log p_{\btheta}(\bx_t~\mid~\by_{t+1:T}) - \log p_{\btheta}(\bx_t)$ up to a constant which can be ignored, for details see Appendix \ref{app:sec:methods:dre} and \cite{sugiyama2012density}.

We use the DRE-learned twisting functions in an alternating scheme that first holds $p_{\btheta}, q_{\btheta}$ fixed and updates $r_{\bpsi}$ using density ratio estimation, and then holds $r_{\bpsi}$ fixed and updates $p_{\btheta}$ and $q_{\btheta}$ by ascending a biased gradient estimator (no resampling terms) of $\mathcal{L}_\mathrm{SIXO}(\btheta, \bpsi)$ in $\btheta$. We call the full alternating procedure for learning $\btheta$ and $\bpsi$ SIXO-DRE, see Algorithm \ref{algo:sixo-dre}.

\subsection{The SIXO Bound Can Become Tight}
\label{sec:methods:tightness}

\citet{maddison2017filtering} show that the FIVO bound can only become tight in models with uncommon dependency structures.  We show that the SIXO bound can become tight for any model in the class defined in Section \ref{sec:background}.
\begin{prop}
\label{prop:sixo} 
{\normalfont Sharpness of the SIXO bound.} Let $p(\bx_{1:T}, \by_{1:T})$ be a latent variable model with Markovian structure as defined in Section \ref{sec:background}, let $\mathcal{Q}$ be the set of possible sequences of proposal distributions indexed by parameters $\btheta \in \Theta$, and let $\mathcal{R}$ be the set of possible sequences of positive, integrable twist functions indexed by parameters $\bpsi \in \Psi$. Assume that $\{p(\bx_t \mid \bx_{t-1}, \by_{1:T})\}_{t=1}^{T} \in \mathcal{Q}$ and $\{p(\by_{t+1:T} \mid \bx_t)\}_{t=1}^{T-1} \in \mathcal{R}$. Finally, assume $\mathcal{L}_\mathrm{SIXO}(\btheta, \bpsi, \by_{1:T})$ has the unique optimizer $\btheta^*, \bpsi^* = \argmax_{\btheta\in \Theta, \bpsi \in \Psi} \mathcal{L}_\mathrm{SIXO}(\btheta, \bpsi, \by_{1:T})$.

Then the following holds:
\begin{enumerate}
    \item $q_{\btheta^*}(\bx_t \mid \bx_{1:t-1}, \by_{1:T}) = p(\bx_t \mid \bx_{1:t-1}, \by_{1:T})$ \ for \ $t=1,\ldots, T$,
    \item $r_{\bpsi^*}(\by_{t+1:T}, \bx_t) \propto p(\by_{t+1:T} \mid \bx_t)$ up to a constant independent of $\bx_t$ \ for \ $t=1,\ldots, T-1$, 
    \item $\mathcal{L}_\mathrm{SIXO}(\btheta^*, \bpsi^*, \by_{1:T}) = \log p(\by_{1:T})$ for any number of particles $K \geq 1$.
\end{enumerate} 
\end{prop}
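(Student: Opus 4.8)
The plan is to exploit the fact that the Jensen step in the displayed bound \eqref{equ:sixo_bound} is an equality precisely when $\widehat{Z}_\mathrm{SIXO}$ is almost surely constant (strict concavity of $\log$), in which case unbiasedness forces that constant to be $p(\by_{1:T})$. So the argument has two parts: (i) exhibit a \emph{feasible} choice of proposal and twist for which $\widehat{Z}_\mathrm{SIXO} \equiv p(\by_{1:T})$ identically, so that $\mathcal{L}_\mathrm{SIXO}$ equals its global upper bound $\log p(\by_{1:T})$ there; and (ii) use the uniqueness hypothesis to conclude that the maximizer $(\btheta^*, \bpsi^*)$ must be such a choice, which reads off conclusions~1--3.

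For part~(i) I would carry out the following computation. Take $q_{\btheta}(\bx_t\mid\bx_{1:t-1},\by_{1:T}) = p(\bx_t\mid\bx_{1:t-1},\by_{1:T})$ and $r_{\bpsi}(\by_{t+1:T},\bx_t) = p(\by_{t+1:T}\mid\bx_t)$; both lie in $\mathcal{Q}$ and $\mathcal{R}$ by hypothesis, and they are consistent with the convention $r_{\bpsi}(x_T)=1$ since $p(\by_{T+1:T}\mid\bx_T)\equiv 1$. By the twist factorization $p(\bx_{1:t},\by_{1:t})\,p(\by_{t+1:T}\mid\bx_t) = p(\bx_{1:t},\by_{1:T})$ (Appendix~\ref{app:sec:background:twist-factor}) the unnormalized targets become $\gamma_t(\bx_{1:t}) = p(\bx_{1:t},\by_{1:T})$. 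Substituting into the SMC incremental weight
\[
w_t^k \;=\; \frac{\gamma_t(\bx_{1:t}^k)}{\gamma_{t-1}(\bx_{1:t-1}^k)\,q_{\btheta}(\bx_t^k\mid\bx_{1:t-1}^k,\by_{1:T})}
\]
and using the chain rule $p(\bx_{1:t}\mid\by_{1:T}) = p(\bx_{1:t-1}\mid\by_{1:T})\,p(\bx_t\mid\bx_{1:t-1},\by_{1:T})$, everything cancels: $w_t^k = 1$ for $t=2,\dots,T$, while $w_1^k = p(\bx_1^k,\by_{1:T})/p(\bx_1^k\mid\by_{1:T}) = p(\by_{1:T})$. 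Since all weights are equal across particles at every step, resampling does not affect the estimator and $\widehat{Z}_\mathrm{SIXO} = \prod_{t=1}^T \big(\tfrac{1}{K}\sum_{k=1}^K w_t^k\big) = p(\by_{1:T})$ identically, for every $K\ge 1$. Hence $\mathcal{L}_\mathrm{SIXO} = \log p(\by_{1:T})$ at this point, which by \eqref{equ:sixo_bound} is the largest value $\mathcal{L}_\mathrm{SIXO}$ can attain.

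For part~(ii), note that the same telescoping shows that any twist $r_{\bpsi}(\by_{t+1:T},\bx_t) = c_t\,p(\by_{t+1:T}\mid\bx_t)$ with positive factors $c_t$ that do not depend on $\bx_t$ (and $c_T=1$ from the boundary convention) also gives $\widehat{Z}_\mathrm{SIXO}\equiv p(\by_{1:T})$, because the $c_t$ telescope; conversely, constancy of $\widehat{Z}_\mathrm{SIXO}$ forces each incremental weight to be constant in the trajectory, which together with the constrained product form of the proposals pins each proposal factor to the smoothing conditional and each twist to be proportional to the lookahead up to an $\bx_t$-independent constant. Thus every global maximizer satisfies conclusions~1 and~2 (the residual constant freedom being exactly why ``up to a constant'' appears in conclusion~2); the uniqueness hypothesis then identifies $(\btheta^*,\bpsi^*)$ with this choice, and conclusion~3 is the computation of part~(i). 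I expect the main obstacle to be making the converse direction rigorous --- fully characterizing \emph{all} $(\btheta,\bpsi)$ for which $\widehat{Z}_\mathrm{SIXO}$ is almost surely constant once multinomial resampling with $K\ge2$ is in play --- but this is precisely what the ``unique optimizer'' assumption lets us sidestep; the remaining care is the bookkeeping of the $t=1$ and $t=T$ boundary weights and checking that equal particle weights make the estimator independent of the resampled ancestry.
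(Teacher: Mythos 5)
Your proposal is correct and follows essentially the same route as the paper: establish the upper bound via Jensen plus unbiasedness, plug in the smoothing proposals and lookahead twists, observe that the incremental weights telescope so that $\widehat{Z}_\mathrm{SIXO} \equiv p(\by_{1:T})$ deterministically for any $K$, and invoke the uniqueness hypothesis to identify this feasible point with $(\btheta^*,\bpsi^*)$. The paper presents the weight computation as an induction on $t$ tracking whether resampling has reset the weights to $1$, while you phrase it as a telescoping product, and your extra remarks about the converse characterization are (as you note) unnecessary given the uniqueness assumption — but these are presentational differences only.
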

\begin{proof}
See Appendix \ref{app:sec:methods:sixo_bound:tightness}.
\end{proof}

This is an important advantage of our work---the SIXO objective is the first to recover the true marginal likelihood with a finite number of particles while also being tailored to sequential tasks.
\section{Related Work}
\label{sec:related}

Good references for SMC include \citet{doucet2009tutorial, naesseth2019elements}, and \citet{del2004feynman} which provides a theoretical treatment of a generalization of SMC called Feynman-Kac formulae. \citet{pitt1999filtering} introduced the auxiliary particle filter, an early smoothing SMC method which constructs an estimate of the one-step backwards message $p_{\btheta}(\by_{t+1} \mid \bx_t)$ using simulations from the model. Smoothing SMC in general is discussed thoroughly in \citet{briers2010smoothing} and \citet{del2010forward}. Later, \citet{whiteley2014twisted} introduced twisted particle filters, which perform SMC on a twisted model that approximates the smoothing distributions by multiplying the model's filtering distributions with ``twisting'' functions. Our work extends the theoretical framework in \citet{whiteley2014twisted} by proposing practical and effective methods for learning parametric twisting functions.

To make smoothing SMC computationally tractable, fixed-lag techniques use information from only a fixed window of future observations, as introduced in \citet{bernardo1999fixed} and surveyed in \citet{lin2013lookahead}. For example, \citet{park2020inference} use simulations from the model to estimate fixed-lag twisting functions and \citet{doucet2006efficient} sample blocks of latents conditional on their observations via various Monte Carlo methods. These methods suffer from computational complexity that grows with the window size, and fail to take advantage of all future observations.

Other methods use twisting functions which depend on all observations. Most similar to our approach are \citet{guarniero2017iterated} and \citet{heng2020controlled} which learn parametric twists using a Bellman-type decomposition of the lookahead distributions $p(\by_{t+1:T} \mid \bx_t)$ in terms of the same distributions one step into the future. \citet{del2015sequential} use Gaussian processes to approximate the twists, \citet{lindsten2018graphical} use traditional graphical model techniques such as loopy belief propagation and expectation propagation, and \citet{ruiz2017particle} use optimal control techniques. None of these approaches consider model learning and their twist learning techniques are highly specialized to their problem settings.

Fitting model parameters via stochastic gradient ascent on an evidence lower bound (ELBO) was introduced in \citet{ranganath2013black,hoffman2013stochastic,kingma2013auto}, and later generalized to the Monte Carlo objectives (MCO) framework by \citet{mnih2016variational}. Since then, works have considered optimizing lower bounds defined by the normalizing constant estimators from multiple importance sampling \cite{burda2015importance}, rejection sampling and Hamiltonian Monte Carlo \cite{lawson2019energy}, filtering SMC \cite{maddison2017filtering,naesseth2018variational,le2017auto}, and smoothing SMC \cite{lawson2018twisted,moretti2020variational,moretti2019smoothing}. The prior work on smoothing SMC used an objective defined by forward filtering backwards smoothing \cite{briers2010smoothing} which suffers from the same particle degeneracy issues as filtering SMC and cannot become tight. \citet{kim2020variational} optimize the importance weighted autoencoder (IWAE) bound \cite{burda2015importance} using a gradient estimator that incorporates a \emph{baseline} derived from future likelihood estimates, but do not use SMC or resampling in their bound.
\section{Experiments}
\label{sec:exp}

We experimentally explore our claims that:
\begin{enumerate}
    \item The SIXO bound can become tight while FIVO cannot.
    \item DRE-learned twists enable better posterior inference than filtering SMC.
    \item Model learning with SIXO provides better parameter estimates than FIVO.
\end{enumerate}

\subsection{Gaussian Drift Diffusion}
\label{sec:exp:gdd}

We first consider a one-dimensional Gaussian drift diffusion process with joint distribution
\begin{align}
    p_{\btheta} \left(\bx_{1:T}, y_T \right) = \cn \left(y_T \mid x_T + \alpha, \sigma_y^2 \right) \cn \left(x_1 ; \alpha, \sigma_x^2 \right) \prod_{t=2}^T \cn \left(x_{t} \mid x_{t-1} + \alpha, \sigma_x^2 \right).
\end{align}
The single free model parameter is the drift $\alpha$, the state is $x_t \in \mathbb{R}$, and the observation is $y_T \in \mathbb{R}$. Figures \ref{fig:banner:dists:filtering} and \ref{fig:banner:dists:smoothing} show that for $\alpha=0$ the filtering and smoothing distributions in this model quickly diverge, which can lead to poor inference for filtering methods.

We compare joint model, proposal and twist learning using two variants of SIXO to variational inference with the IWAE bound~\citep{burda2015importance} and FIVO with unbiased gradients \cite{maddison2017filtering,naesseth2018variational,le2017auto}. All methods use an independent proposal at each time step parameterized as $q_t(x_t \mid x_{t-1}, y_T) = \mathcal{N}(x_t; f_t(x_{t-1}, y_T), \sigma_{qt}^2)$ where $f_t$ is an affine function, a family which contains the optimal proposal. SIXO-u uses twists parameterized as $r_t(y_T, x_t) = \mathcal{N}(y_T; g_t(x_t), \sigma_{rt}^2)$ where $g_t$ is an affine function, a family which contains the true lookahead distributions. The SIXO-DRE twist $\log r_t(y_T, x_t)$ is parameterized as a quadratic function of $x_t$, where the parameters of the quadratic function are generated by a neural network with inputs $(y_T, t)$. The true log density ratio will be quadratic in $\bx_t$, so if the neural network is sufficiently flexible, the true log density ratio function can be obtained.

\begin{figure}[t]
    \centering
    
    \begin{subfigure}[t]{0.5\textwidth}
        \includegraphics[width=\textwidth]{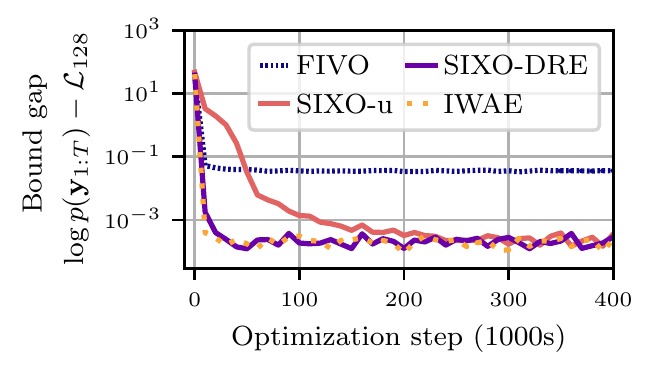}
        \vspace*{-0.5cm}
        \caption{Bound gap for different methods. }
        \label{fig:exp:gdd:results:bound}
    \end{subfigure}%
    \hfill%
    \begin{subfigure}[t]{0.5\textwidth}
        \includegraphics[width=\textwidth]{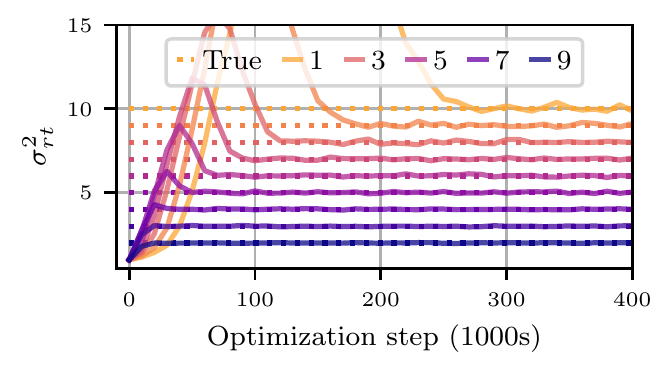}
        \vspace*{-0.5cm}
        \caption{Convergence of twist parameters with SIXO-u.}
        \label{fig:exp:gdd:results:param}
    \end{subfigure}%
    \caption{Bound gap and parameter convergence for the Gaussian drift diffusion experiment presented in Section \ref{sec:exp:gdd}.  Further figures and discussion are included in Appendix \ref{app:sec:experiments:gdd}.}
    \label{fig:exp:gdd:results}
\end{figure}

Figure \ref{fig:exp:gdd:results:bound} shows the convergence of the variational bound for each method. As expected IWAE recovers a tight variational bound, whereas FIVO does not. While SIXO-u does recover a tight variational bound, the high variance of the unbiased gradient estimator makes it impractical for non-toy problems. Conversely, SIXO-DRE achieves a tight bound but under biased, lower variance gradients. This motivates its use in more complex, non-linear settings where the unbiased FIVO gradients are not practical. Figure \ref{fig:exp:gdd:results:param} shows that SIXO-u recovers the correct twist parameters. More figures illustrating the convergence of $\btheta$ and $\bpsi$ are included in Appendix \ref{app:sec:experiments:gdd}.

In Figures \ref{fig:banner:lineages:fivo} and \ref{fig:banner:lineages:sixo} we compare particle trajectories under FIVO and SIXO-u. We see that FIVO consistently proposes particles with high likelihood under the posterior distributions (identical to the smoothing distributions in this case) which are discarded by the resampling steps in filtering SMC. In contrast, SIXO both proposes particles with high posterior likelihood and retains them through the resampling steps by properly scoring particles under the twisted target distributions. These results empirically verify the theoretical claims made in Section \ref{sec:methods:tightness}.

\subsection{Stochastic Volatility Model}
\label{sec:exp:svm}
We now apply SIXO to a stochastic volatility model (SVM) of monthly foreign exchange rates for $N=22$ currencies in the period from 9/2007 to 8/2017~\citep{chib2009multivariate}. The SVM generative model is
\begin{align}
    \bx_1 \sim \mathcal{N} \left( \mathbf{0}, \mathbf{Q} \right) & , \quad \bx_t = \bmu + \bphi\odot \left( \bx_{t-1} - \bmu \right) + \boldsymbol\nu_t , \quad \by_t = \boldsymbol\beta \odot \mathrm{exp} \left( \frac{\bx_t}{2} \right) \odot \mathbf{e}_t , 
\end{align}
with transition noise $\boldsymbol\nu_t \sim \mathcal{N} \left( \mathbf{0}, \mathbf{Q} \right)$, observation noise $\mathbf{e}_t \sim \mathcal{N} \left( \mathbf{0}, I_{N\times N} \right)$, states $\bx_{1:T} \in \mathbb{R}^{T\times N}$, and observations $\by_{1:T} \in \mathbb{R}^{T \times N}$.
All multiplications are performed element-wise as represented by $\odot$. The model has free parameters $\bmu \in \mathbb{R}^N , \bphi \in \left[ 0, 1 \right]^N , \boldsymbol\beta \in \mathbb{R}^N_+$, and $\mathbf{Q} \in \mathrm{diag}(\mathbb{R}^N_+)$ such that there are $4N$ model parameters. The proposal, $q_{\btheta}$, is structured as $q_{\btheta}(\bx_{1:T}) \propto \prod_{t=1}^T \mathcal{N}(\bx_t;\boldsymbol\mu_t, \mathbf{\Sigma}_t) p_{\btheta}(\bx_t \mid \bx_{t-1})$ with means $\boldsymbol\mu_t \in \mathbb{R}^N$ and diagonal covariance matrices $\mathbf{\Sigma}_t \in \mathrm{diag}(\mathbb{R}_+^N)$ so that there are $2NT$ proposal parameters. We compare three approaches: FIVO, SIXO with quadrature twist (SIXO-q), and SIXO with density ratio twist (SIXO-DRE).  For more specifics and hyperparameters, see Appendix \ref{app:sec:experiments:svm}. 

\paragraph{Train Performance} We first compare our methods in terms of log marginal likelihood lower bounds as in \citet{naesseth2018variational}. We evaluate all checkpoints after $75\%$ of training using each method's corresponding 4-particle bound; for FIVO we report the FIVO bound, for SIXO-q we report the SIXO-q bound, and for SIXO-DRE we report the SIXO-DRE bound. Even though SIXO-q only scores a single future observation, it still obtains a 7-nat improvement over FIVO. SIXO-DRE, meanwhile, conditions on all future observations and obtains a 10-nat improvement over FIVO.

We also attempt to estimate the true marginal likelihood by computing a bootstrap particle filter's log marginal lower bound with 2048 particles, denoted $\mathcal{L}_\mathrm{BPF}^{2048}$ \cite{gordon1993bpf}. Interestingly, a one-way ANOVA \cite{fisher1992statistical} does not reject the null hypothesis that the training set $\mathcal{L}_\mathrm{BPF}^{2048}$ means are all equal ($p = 0.25$), suggesting that the log marginal likelihoods on training data are indistinguishable and training performance has saturated. It is clear, however, that SIXO performs better inference and makes more efficient use of particles as the $\mathcal{L}_\mathrm{SIXO}^4$ bounds are significantly higher than $\mathcal{L}_\mathrm{FIVO}^4$ for models with similar true marginal likelihoods.

% Please add the following required packages to your document preamble:
% \usepackage{booktabs}
\begin{table}
      \caption{Performance of FIVO and SIXO on the SVM.}
      \label{tab:svm}
      \centering
    \begin{tabular}{@{}lccc@{}}
    \toprule
    Method &  Train $\mathcal{L}_\mathrm{Method}^4$ (as in \cite{naesseth2018variational}) & Train $\mathcal{L}_\mathrm{BPF}^{2048}$ & Test $\mathcal{L}_\mathrm{BPF}^{2048}$      \\ \midrule
    FIVO        & $6921.29 \pm 1.33$            & $\mathbf{7020.14 \pm  2.86}$      & $3352.71 \pm 1.3$ \\ 
    SIXO-q      & $6928.90 \pm 1.24$            & $\mathbf{7019.65 \pm 2.97}$       & $3353.45 \pm 1.58$\\ 
    SIXO-DRE    & $\mathbf{6931.51 \pm 2.08}$   & $\mathbf{7019.42 \pm 3.01}$       & $\mathbf{3354.27 \pm 1.60}$ \\ \bottomrule
    \end{tabular}
\end{table}

\paragraph{Test Performance} We also compare methods on a held-out test set to evaluate each method's influence on model learning. We construct this test set using the same data source as the training set, but use the period of time since \citet{naesseth2018variational} was published (an extra $55$ months). Again, we report BPF log marginal lower bounds with 2048 particles and find that SIXO-DRE outperforms SIXO-q and FIVO.

\subsection{Hodgkin-Huxley Model}
\label{sec:exp:hh}

\begin{figure}[t]
    \centering
    \includegraphics[width=\textwidth]{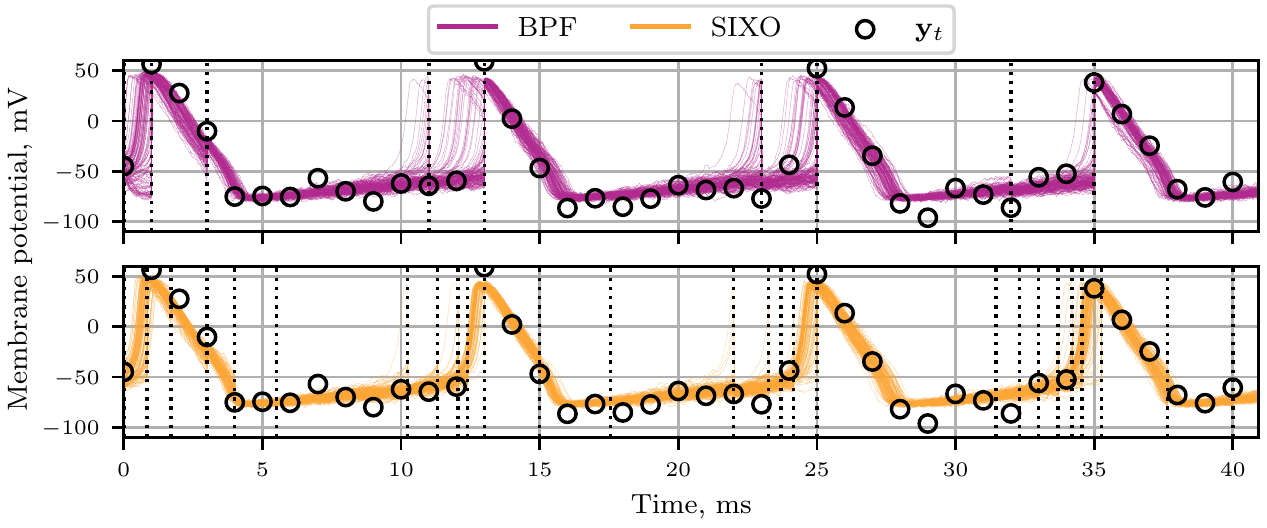}
    \caption{Comparison of the filtering distributions generated by a bootstrap particle filter (BPF) (top) and a SIXO sweep (bottom) on synthetic data from the Hodgkin-Huxley model.  Dotted vertical lines are resampling events. Both sweeps use true model parameters and a bootstrap proposal, but SIXO uses DRE-trained twisting functions. We see that the twist has reduced the number of erroneous spikes generated under the BPF, and more particles accurately predict the initiation of a spike.}
    \label{fig:exp:hh:twisted_smc}
\end{figure}

We conclude by comparing FIVO and SIXO on the Hodgkin-Huxley (HH) model of neural action potentials~\citep{hodgkin1952quantitative, dayan2005theoretical}. A single neuron is represented with a four-dimensional state-space: the instantaneous membrane potential and the relative conductivity of three ion gates. A noise-corrupted and subsampled membrane potential can be obtained using electrodes~\citep{kita2008microelectrodes} or voltage imaging~\citep{peterka2011imaging}. The state of the gates, however, is not observable, and must be inferred from the noisy potential recordings. The physiological parameters governing the time-evolution of the system are also of interest, such as the base conductance of each of the ion channels.

We implement the HH model as a four-dimensional nonlinear state space model with Gaussian transition noise~\citep{huys2009smoothing}. The observation is a single Gaussian-distributed value with mean equal to the instantaneous potential. We subsample observations by a factor of $50$ to simulate an acquisition frequency of $1$kHz. For more details, see Appendix \ref{app:sec:experiments:hh}.
 
In this model action potentials, or spikes, are rare events that happen quickly and invoke a rapid change in the state. Therefore, filtering-based inference is particularly disadvantageous as noisy observations may trigger erroneous spikes or ``miss'' true spikes. 

\paragraph{Inference}
Figure \ref{fig:exp:hh:twisted_smc} shows a BPF generating spurious spikes and missing the initiation of other spikes, demonstrating this shortcoming. SIXO, despite using an unlearned bootstrap proposal, generates fewer spurious spikes and fewer particles miss spikes. SIXO also achieves a higher log marginal lower bound ($-73.88$ nats) than the bootstrap particle filter ($-74.89$ nats), showing that it performs more effective inference. These results hint at broader potential for DRE twists as a flexible, general-purpose tool for improving inference in non-linear models.

\begin{figure}[t]
    \centering
    
    \begin{subfigure}[t]{0.48\textwidth}
        \includegraphics[width=\textwidth]{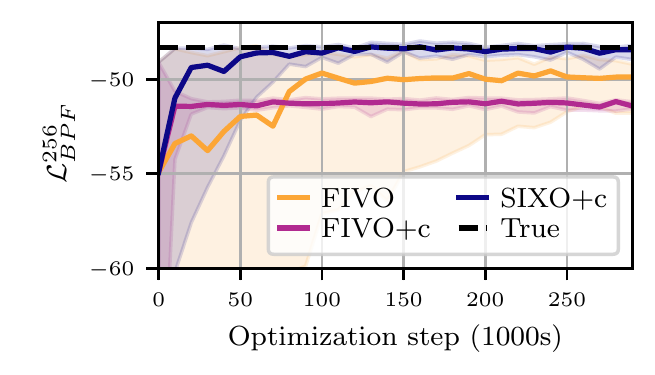}
        \caption{Test set $\mathcal{L}_\mathrm{BPF}^{256}$ over training.}
        \label{fig:hh_results:ll}
    \end{subfigure}%
    \hfill%
    \begin{subfigure}[t]{0.48\textwidth}
        \includegraphics[width=\textwidth]{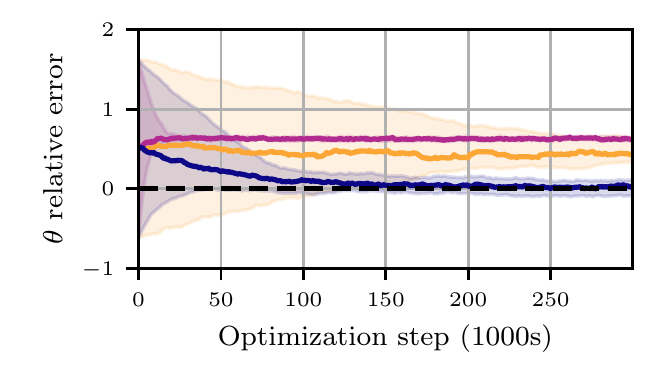}
        \caption{Relative parameter error over training.}
        \label{fig:hh_results:theta}
    \end{subfigure}%
    
    \caption{HH model learning results. SIXO is the only method to recover the true parameter.}
    \label{fig:hh_results}
\end{figure}

\paragraph{Model Learning}
We conclude by comparing FIVO and SIXO for parameter recovery in Figure \ref{fig:hh_results}. The relative parameter error is computed as $(\btheta^* - \btheta^{\mathrm{true}}) / \btheta^{\mathrm{true}}$. We found gradient clipping improves stability in SIXO, so we compare to FIVO with and without clipping. FIVO with clipping converges quickly to an incorrect parameter that achieves a low marginal likelihood, while FIVO without clipping converges more slowly to parameters that achieves a higher bound. SIXO is the only method to recover the correct model, and achieves the highest log marginal likelihood.

% Please add the following required packages to your document preamble:
% \usepackage{booktabs}
\begin{table}[t]
      \caption{Performance of FIVO and SIXO on the Hodgkin-Huxley model.}
      \label{tab:hh}
      \centering
    \begin{tabular}{@{}lccc@{}}
    \toprule
    Method & Test $\mathcal{L}_\mathrm{Method}^{256}$ & Test $\mathcal{L}_{\mathrm{BPF}}^{256}$ & Relative Parameter Error\\ 
    \midrule
    (True model) & (N/A) & $(-48.31)$ & $(0.0 \pm 0.0)$ \\
    \midrule
    FIVO            & $-50.25 \pm 1.38$                     & $-50.21 \pm 1.28$                     & $0.46 \pm 0.16$\\
    FIVO (+ clip)   & $-51.43 \pm 0.49$                     & $-51.35 \pm 0.29$                     & $0.63 \pm 0.02$\\
    \textbf{SIXO}   & $\mathbf{-47.94} \pm \mathbf{0.25}$   & $\mathbf{-48.53} \pm \mathbf{0.36}$   & $\mathbf{0.02} \pm \mathbf{0.09}$\\
    \bottomrule
    \end{tabular}
\end{table}

\section{Conclusion}
\label{sec:conclusion}

In this work we proposed a method of learning twisting functions for smoothing SMC via density ratio estimation. Our approach involves ascending a lower bound on the log marginal likelihood that can theoretically become tight, a first for sequential Monte Carlo objectives. We verified our theoretical claims by experimentally demonstrating improvements over existing techniques in inference and model learning.

\paragraph{Acknowledgements and Disclosure of Funding} We thank Matt MacKay for helpful discussions and edits. This work was supported by grants from the Simons Collaboration on the Global Brain (SCGB 697092), the NIH BRAIN Initiative (U19NS113201 and R01NS113119). Some of the computation for this work was made possible by Microsoft Education Azure cloud credits. Dieterich Lawson was supported in part by a Stanford Data Science Fellowship.

% \clearpage
\bibliographystyle{unsrtnat}
\bibliography{99_main.bib}

\begin{thebibliography}{53}
\providecommand{\natexlab}[1]{#1}
\providecommand{\url}[1]{\texttt{#1}}
\expandafter\ifx\csname urlstyle\endcsname\relax
  \providecommand{\doi}[1]{doi: #1}\else
  \providecommand{\doi}{doi: \begingroup \urlstyle{rm}\Url}\fi

\bibitem[Doucet and Johansen(2011)]{doucet2009tutorial}
Arnaud Doucet and Adam~M. Johansen.
\newblock A tutorial on particle filtering and smoothing: Fifteen years later.
\newblock In Dan Crisan and Boris Rozovsky, editors, \emph{The Oxford Handbook
  of Nonlinear Filtering}, pages 656--704. Oxford University Press, 2011.

\bibitem[Naesseth et~al.(2019)Naesseth, Lindsten, Sch{\"o}n,
  et~al.]{naesseth2019elements}
Christian~A. Naesseth, Fredrik Lindsten, Thomas~B. Sch{\"o}n, et~al.
\newblock Elements of sequential {M}onte {C}arlo.
\newblock \emph{Foundations and Trends{\textregistered} in Machine Learning},
  12\penalty0 (3):\penalty0 307--392, 2019.

\bibitem[Del~Moral(2004)]{del2004feynman}
Pierre Del~Moral.
\newblock \emph{Feynman-Kac formulae: genealogical and interacting particle
  systems with applications}, volume~88.
\newblock Springer, 2004.

\bibitem[Andrieu et~al.(2004)Andrieu, Doucet, Singh, and
  Tadic]{andrieu2004particle}
Christophe Andrieu, Arnaud Doucet, Sumeetpal~S. Singh, and Vladislav~B. Tadic.
\newblock Particle methods for change detection, system identification, and
  control.
\newblock \emph{Proceedings of the IEEE}, 92\penalty0 (3):\penalty0 423--438,
  2004.

\bibitem[Gu et~al.(2015)Gu, Ghahramani, and Turner]{gu2015neural}
Shixiang~Shane Gu, Zoubin Ghahramani, and Richard~E. Turner.
\newblock Neural adaptive sequential {M}onte {C}arlo.
\newblock \emph{Advances in neural information processing systems}, 28, 2015.

\bibitem[Blei et~al.(2017)Blei, Kucukelbir, and McAuliffe]{blei2017variational}
David Blei, Alp Kucukelbir, and Jon~D. McAuliffe.
\newblock Variational inference: A review for statisticians.
\newblock \emph{Journal of the American Statistical Association}, 112\penalty0
  (518):\penalty0 859--877, 2017.

\bibitem[Wainwright and Jordan(2008)]{wainwright2008graphical}
Martin~J. Wainwright and Michael~I. Jordan.
\newblock Graphical models, exponential families, and variational inference.
\newblock \emph{Foundations and Trends{\textregistered} in Machine Learning},
  1\penalty0 (1--2):\penalty0 1--305, 2008.

\bibitem[Maddison et~al.(2017)Maddison, Lawson, Tucker, Heess, Norouzi, Mnih,
  Doucet, and Teh]{maddison2017filtering}
Chris~J. Maddison, Dieterich Lawson, George Tucker, Nicolas Heess, Mohammad
  Norouzi, Andriy Mnih, Arnaud Doucet, and Yee~Whye Teh.
\newblock Filtering variational objectives.
\newblock \emph{Advances in Neural Information Processing Systems}, 30, 2017.

\bibitem[Naesseth et~al.(2018)Naesseth, Linderman, Ranganath, and
  Blei]{naesseth2018variational}
Christian~A. Naesseth, Scott Linderman, Rajesh Ranganath, and David Blei.
\newblock Variational sequential {M}onte {C}arlo.
\newblock In \emph{International Conference on Artificial Intelligence and
  Statistics}, pages 968--977. PMLR, 2018.

\bibitem[Le et~al.(2018)Le, Igl, Rainforth, Jin, and Wood]{le2017auto}
Tuan~Anh Le, Maximilian Igl, Tom Rainforth, Tom Jin, and Frank Wood.
\newblock Auto-encoding sequential {M}onte {C}arlo.
\newblock In \emph{6th International Conference on Learning Representations},
  2018.

\bibitem[Lawson et~al.(2018)Lawson, Tucker, Naesseth, Maddison, Adams, and
  Teh]{lawson2018twisted}
Dieterich Lawson, George Tucker, Christian~A. Naesseth, Chris Maddison, Ryan~P.
  Adams, and Yee~Whye Teh.
\newblock Twisted variational sequential {M}onte {C}arlo.
\newblock In \emph{Third workshop on Bayesian Deep Learning (NeurIPS)}, 2018.

\bibitem[Lawson et~al.(2019)Lawson, Tucker, Dai, and
  Ranganath]{lawson2019energy}
Dieterich Lawson, George Tucker, Bo~Dai, and Rajesh Ranganath.
\newblock Energy-inspired models: Learning with sampler-induced distributions.
\newblock \emph{Advances in Neural Information Processing Systems}, 32, 2019.

\bibitem[Whiteley and Lee(2014)]{whiteley2014twisted}
Nick Whiteley and Anthony Lee.
\newblock Twisted particle filters.
\newblock \emph{The Annals of Statistics}, 42\penalty0 (1):\penalty0 115--141,
  2014.

\bibitem[Mnih and Rezende(2016)]{mnih2016variational}
Andriy Mnih and Danilo Rezende.
\newblock Variational inference for {M}onte {C}arlo objectives.
\newblock In \emph{International Conference on Machine Learning}, pages
  2188--2196. PMLR, 2016.

\bibitem[Briers et~al.(2010)Briers, Doucet, and Maskell]{briers2010smoothing}
Mark Briers, Arnaud Doucet, and Simon Maskell.
\newblock Smoothing algorithms for state--space models.
\newblock \emph{Annals of the Institute of Statistical Mathematics},
  62\penalty0 (1):\penalty0 61--89, 2010.

\bibitem[Guarniero et~al.(2017)Guarniero, Johansen, and
  Lee]{guarniero2017iterated}
Pieralberto Guarniero, Adam~M. Johansen, and Anthony Lee.
\newblock The iterated auxiliary particle filter.
\newblock \emph{Journal of the American Statistical Association}, 112\penalty0
  (520):\penalty0 1636--1647, 2017.

\bibitem[Pitt and Shephard(1999)]{pitt1999filtering}
Michael~K. Pitt and Neil Shephard.
\newblock Filtering via simulation: Auxiliary particle filters.
\newblock \emph{Journal of the American Statistical Association}, 94\penalty0
  (446):\penalty0 590--599, 1999.

\bibitem[Lindsten et~al.(2018)Lindsten, Helske, and
  Vihola]{lindsten2018graphical}
Fredrik Lindsten, Jouni Helske, and Matti Vihola.
\newblock Graphical model inference: Sequential {M}onte {C}arlo meets
  deterministic approximations.
\newblock \emph{Advances in Neural Information Processing Systems}, 31, 2018.

\bibitem[Clapp and Godsill(1999)]{bernardo1999fixed}
Tim~C. Clapp and Simon~J. Godsill.
\newblock Fixed-lag smoothing using sequential importance sampling.
\newblock In Jos\'{e}-Miguel Bernardo, James Berger, Philip Dawid, and Adrian
  Smith, editors, \emph{Bayesian Statistics 6}, pages 743--751. Citeseer, 1999.

\bibitem[Lin et~al.(2013)Lin, Chen, and Liu]{lin2013lookahead}
Ming Lin, Rong Chen, and Jun~S. Liu.
\newblock Lookahead strategies for sequential {M}onte {C}arlo.
\newblock \emph{Statistical Science}, 28\penalty0 (1):\penalty0 69--94, 2013.

\bibitem[Abramowitz and Stegun(1964)]{abramowitz1964handbook}
Milton Abramowitz and Irene~A. Stegun.
\newblock \emph{Handbook of mathematical functions with formulas, graphs, and
  mathematical tables}, volume~55.
\newblock US Government printing office, 1964.

\bibitem[Mohamed and Lakshminarayanan(2016)]{mohamed2016learning}
Shakir Mohamed and Balaji Lakshminarayanan.
\newblock Learning in implicit generative models.
\newblock \emph{arXiv preprint arXiv:1610.03483}, 2016.

\bibitem[Sugiyama et~al.(2012)Sugiyama, Suzuki, and
  Kanamori]{sugiyama2012density}
Masashi Sugiyama, Taiji Suzuki, and Takafumi Kanamori.
\newblock \emph{Density ratio estimation in machine learning}.
\newblock Cambridge University Press, 2012.

\bibitem[Del~Moral et~al.(2010)Del~Moral, Doucet, and Singh]{del2010forward}
Pierre Del~Moral, Arnaud Doucet, and Sumeetpal Singh.
\newblock Forward smoothing using sequential {M}onte {C}arlo.
\newblock Technical Report CUED/F-INFENG/TR 638, Cambridge University, 2010.

\bibitem[Park and Ionides(2020)]{park2020inference}
Joonha Park and Edward~L. Ionides.
\newblock Inference on high-dimensional implicit dynamic models using a guided
  intermediate resampling filter.
\newblock \emph{Statistics and Computing}, 30\penalty0 (5):\penalty0
  1497--1522, 2020.

\bibitem[Doucet et~al.(2006)Doucet, Briers, and
  S{\'e}n{\'e}cal]{doucet2006efficient}
Arnaud Doucet, Mark Briers, and St{\'e}phane S{\'e}n{\'e}cal.
\newblock Efficient block sampling strategies for sequential {M}onte {C}arlo
  methods.
\newblock \emph{Journal of Computational and Graphical Statistics}, 15\penalty0
  (3):\penalty0 693--711, 2006.

\bibitem[Heng et~al.(2020)Heng, Bishop, Deligiannidis, and
  Doucet]{heng2020controlled}
Jeremy Heng, Adrian~N. Bishop, George Deligiannidis, and Arnaud Doucet.
\newblock Controlled sequential {M}onte {C}arlo.
\newblock \emph{The Annals of Statistics}, 48\penalty0 (5):\penalty0
  2904--2929, Oct 2020.
\newblock ISSN 0090-5364.
\newblock \doi{10.1214/19-aos1914}.

\bibitem[Del~Moral and Murray(2015)]{del2015sequential}
Pierre Del~Moral and Lawrence~M. Murray.
\newblock Sequential {M}onte {C}arlo with highly informative observations.
\newblock \emph{SIAM/ASA Journal on Uncertainty Quantification}, 3\penalty0
  (1):\penalty0 969--997, 2015.

\bibitem[Ruiz and Kappen(2017)]{ruiz2017particle}
Hans-Christian Ruiz and Hilbert~J. Kappen.
\newblock Particle smoothing for hidden diffusion processes: Adaptive path
  integral smoother.
\newblock \emph{IEEE Transactions on Signal Processing}, 65\penalty0
  (12):\penalty0 3191--3203, 2017.

\bibitem[Ranganath et~al.(2014)Ranganath, Gerrish, and
  Blei]{ranganath2013black}
Rajesh Ranganath, Sean Gerrish, and David Blei.
\newblock Black box variational inference.
\newblock In \emph{International Conference on Artificial Intelligence and
  Statistics}, pages 814--822. PMLR, 2014.

\bibitem[Hoffman et~al.(2013)Hoffman, Blei, Wang, and
  Paisley]{hoffman2013stochastic}
Matthew~D. Hoffman, David Blei, Chong Wang, and John Paisley.
\newblock Stochastic variational inference.
\newblock \emph{Journal of Machine Learning Research}, 2013.

\bibitem[Kingma and Welling(2014)]{kingma2013auto}
Diederik Kingma and Max Welling.
\newblock Auto-encoding variational {B}ayes.
\newblock In \emph{2nd International Conference on Learning Representations},
  2014.

\bibitem[Burda et~al.(2016)Burda, Grosse, and
  Salakhutdinov]{burda2015importance}
Yuri Burda, Roger Grosse, and Ruslan Salakhutdinov.
\newblock Importance weighted autoencoders.
\newblock In \emph{4th International Conference on Learning Representations},
  2016.

\bibitem[Moretti et~al.(2020)Moretti, Wang, Wu, Drori, and
  Pe’er]{moretti2020variational}
Antonio Moretti, Zizhao Wang, Luhuan Wu, Iddo Drori, and Itsik Pe’er.
\newblock Variational objectives for {M}arkovian dynamics with backward
  simulation.
\newblock In \emph{ECAI 2020}, pages 1371--1378. IOS Press, 2020.

\bibitem[Moretti et~al.(2019)Moretti, Wang, Wu, and
  Pe'er]{moretti2019smoothing}
Antonio Moretti, Zizhao Wang, Luhuan Wu, and Itsik Pe'er.
\newblock Smoothing nonlinear variational objectives with sequential {M}onte
  {C}arlo.
\newblock \emph{ICLR Workshop: Deep Generative Models for Highly Structured
  Data}, 2019.

\bibitem[Kim et~al.(2020)Kim, Jang, Yang, and Kim]{kim2020variational}
Geon-Hyeong Kim, Youngsoo Jang, Hongseok Yang, and Kee-Eung Kim.
\newblock Variational inference for sequential data with future likelihood
  estimates.
\newblock In \emph{International Conference on Machine Learning}, pages
  5296--5305. PMLR, 2020.

\bibitem[Chib et~al.(2009)Chib, Omori, and Asai]{chib2009multivariate}
Siddhartha Chib, Yasuhiro Omori, and Manabu Asai.
\newblock Multivariate stochastic volatility.
\newblock In \emph{Handbook of financial time series}, pages 365--400.
  Springer, 2009.

\bibitem[Gordon et~al.(1993)Gordon, Salmond, and Smith]{gordon1993bpf}
Neil Gordon, David Salmond, and Adrian Smith.
\newblock Novel approach to nonlinear/non-gaussian {B}ayesian state estimation.
\newblock \emph{IEE Proceedings F (Radar and Signal Processing)}, 140\penalty0
  (2):\penalty0 107--113, April 1993.

\bibitem[Fisher(1992)]{fisher1992statistical}
Ronald~Aylmer Fisher.
\newblock Statistical methods for research workers.
\newblock In \emph{Breakthroughs in statistics}, pages 66--70. Springer, 1992.

\bibitem[Hodgkin and Huxley(1952)]{hodgkin1952quantitative}
Alan~L. Hodgkin and Andrew~F. Huxley.
\newblock A quantitative description of membrane current and its application to
  conduction and excitation in nerve.
\newblock \emph{The Journal of Physiology}, 117\penalty0 (4):\penalty0 500,
  1952.

\bibitem[Dayan and Abbott(2005)]{dayan2005theoretical}
Peter Dayan and Laurence~F. Abbott.
\newblock \emph{Theoretical neuroscience: computational and mathematical
  modeling of neural systems}.
\newblock MIT press, 2005.

\bibitem[Kita and Wightman(2008)]{kita2008microelectrodes}
Justin~M. Kita and R.~Mark Wightman.
\newblock Microelectrodes for studying neurobiology.
\newblock \emph{Current Opinion in Chemical Biology}, 12\penalty0 (5):\penalty0
  491--496, 2008.

\bibitem[Peterka et~al.(2011)Peterka, Takahashi, and Yuste]{peterka2011imaging}
Darcy~S. Peterka, Hiroto Takahashi, and Rafael Yuste.
\newblock Imaging voltage in neurons.
\newblock \emph{Neuron}, 69\penalty0 (1):\penalty0 9--21, 2011.

\bibitem[Huys and Paninski(2009)]{huys2009smoothing}
Quentin~J.M. Huys and Liam Paninski.
\newblock Smoothing of, and parameter estimation from, noisy biophysical
  recordings.
\newblock \emph{PLoS computational biology}, 5\penalty0 (5):\penalty0 e1000379,
  2009.

\bibitem[Price(1958)]{price1958useful}
Robert Price.
\newblock A useful theorem for nonlinear devices having {G}aussian inputs.
\newblock \emph{IRE Transactions on Information Theory}, 4\penalty0
  (2):\penalty0 69--72, 1958.

\bibitem[Bonnet(1964)]{bonnet1964transformations}
Georges Bonnet.
\newblock Transformations des signaux al{\'e}atoires a travers les systemes non
  lin{\'e}aires sans m{\'e}moire.
\newblock In \emph{Annales des T{\'e}l{\'e}communications}, volume 19-9, pages
  203--220. Springer, 1964.

\bibitem[Salimans and Knowles(2013)]{salimans2013fixed}
Tim Salimans and David~A. Knowles.
\newblock Fixed-form variational posterior approximation through stochastic
  linear regression.
\newblock \emph{Bayesian Analysis}, 8\penalty0 (4):\penalty0 837--882, 2013.

\bibitem[Rezende et~al.(2014)Rezende, Mohamed, and
  Wierstra]{rezende2014stochastic}
Danilo~Jimenez Rezende, Shakir Mohamed, and Daan Wierstra.
\newblock Stochastic backpropagation and approximate inference in deep
  generative models.
\newblock In \emph{International conference on machine learning}, pages
  1278--1286. PMLR, 2014.

\bibitem[Williams(1992)]{williams1992simple}
Ronald~J. Williams.
\newblock Simple statistical gradient-following algorithms for connectionist
  reinforcement learning.
\newblock \emph{Machine Learning}, 8\penalty0 (3):\penalty0 229--256, 1992.

\bibitem[Bradbury et~al.(2018)Bradbury, Frostig, Hawkins, Johnson, Leary,
  Maclaurin, Necula, Paszke, Vander{P}las, Wanderman-{M}ilne, and
  Zhang]{jax2018github}
James Bradbury, Roy Frostig, Peter Hawkins, Matthew~James Johnson, Chris Leary,
  Dougal Maclaurin, George Necula, Adam Paszke, Jake Vander{P}las, Skye
  Wanderman-{M}ilne, and Qiao Zhang.
\newblock {JAX}: composable transformations of {P}ython+{N}um{P}y programs,
  2018.
\newblock URL \url{http://github.com/google/jax}.

\bibitem[Kingma et~al.(2015)Kingma, Ba, Bengio, and LeCun]{kingma2014adam}
Diederik Kingma, Jimmy Ba, Yoshua Bengio, and Yann LeCun.
\newblock Adam: A method for stochastic optimization.
\newblock In \emph{3rd International Conference on Learning Representations},
  2015.

\bibitem[Dunnett(1980)]{dunnett1980pairwise}
Charles~W. Dunnett.
\newblock Pairwise multiple comparisons in the unequal variance case.
\newblock \emph{Journal of the American Statistical Association}, 75\penalty0
  (372):\penalty0 796--800, 1980.

\bibitem[Sauder and DeMars(2019)]{sauder2019updated}
Derek~C. Sauder and Christine~E. DeMars.
\newblock An updated recommendation for multiple comparisons.
\newblock \emph{Advances in Methods and Practices in Psychological Science},
  2\penalty0 (1):\penalty0 26--44, 2019.

\end{thebibliography}
\appendix
\section*{Appendices for \bsixo: Smoothing Inference with Twisted Objectives}
\section{Table of Notation}
\label{sec:app:notation}

% Please add the following required packages to your document preamble:
% \usepackage{booktabs}
\renewcommand{\arraystretch}{1.2}
\definecolor{Gray}{gray}{0.95}
\begin{table}[H]
\footnotesize
% \centering
\begin{adjustwidth}{-3in}{-3in}  
\begin{center}

\begin{tabular}{p{4.7cm}p{3.35cm}p{6cm}}
\toprule
\textbf{Name} & \textbf{Symbol} &\textbf{Notes} \\ \midrule
Sequence length & $T$ & $T \in \mathbb{N}$ \\
\rowcolor{Gray}
Timestep & $t$ & $t \in \{1, \ldots, T\}$ \\
Latent state & $\bx_t$ & $\bx_t \in \cx$ \\
\rowcolor{Gray}
Observation & $\by_t$ & $\by_t \in \cy$ \\
Observation sequence & $\by_{1:T}$ & $\by_{1:T} \in \cy^T$ \\
\rowcolor{Gray}
Number of SMC particles & $K$ & $K \in \mathbb{N}$\\
$k$\textsuperscript{th} particle latent trajectory & $\bx_{1:t}^{k}$ & $\bx_{1:t}^k \in \cx^t$ \\ \midrule %
\rowcolor{Gray}
Model and proposal parameters & $\btheta$ & $\btheta \in \BTheta$ \\
Twist parameters & $\bpsi$ & $\bpsi \in \BPsi$ \\ %
\rowcolor{Gray}
Joint distribution & $p_{\btheta}( \bx_{1:T}, \by_{1:T} )$ & Distribution on $\cx^T \times \cy^T$\\

$t$\textsuperscript{th} transition distribution & $p_{\btheta}( \bx_t \mid  \bx_{t-1} )$ & Conditional distribution on $\cx$\\
\rowcolor{Gray}
$t$\textsuperscript{th} observation distribution & $p_{\btheta}( \by_t \mid  \bx_{t} )$ &  Conditional distribution on $\cy$\\
Proposal distributions & $\{q_{\btheta} ( \bx_t \mid \bx_{t-1}, \by_{1:T})\}_{t=1}^T$ & $T$ conditional distributions on $\cx$\\
\rowcolor{Gray}
Twist functions & $\{r_{\bpsi} ( \by_{t+1:T}, \bx_t)\}_{t=1}^{T-1}$ & $T-1$ positive integrable functions in $\cx \times \cy^{T-t-1} \times \boldsymbol\Psi \rightarrow \mathbb{R}_{\geq 0}$ \\ %
Filtering distributions & $\{p_{\btheta}( \bx_{1:t} \mid \by_{1:t} )\}_{t=1}^T$  & $T$ conditional distributions on $\cx^t$\\
\rowcolor{Gray}
Smoothing distribution & $\{p_{\btheta}( \bx_{1:t} \mid \by_{1:T})\}_{t=1}^T$ & $T$ conditional distributions on $\cx^t$\\

Unnormalized target distributions & $\left\lbrace \gamma_t(\bx_{1:t}) \right\rbrace_{t=1}^T$ & $T$ positive integrable functions in $\cx^t \rightarrow \mathbb{R}_{\geq 0}$\\
\rowcolor{Gray}
Normalized target distributions & $\left\lbrace \pi_t(\bx_{1:t}) \right\rbrace_{t=1}^T$ & $T$ distributions on $\cx^t$ \\
$t$\textsuperscript{th} normalizing constant & $Z_t$ & Positive real, $\pi_t(\bx_{1:t}) = \gamma_t(\bx_{1:t})/Z_t$ \\ \midrule % 
% %
% %
\rowcolor{Gray}
Bootstrap particle filter & BPF & SMC using $p(\bx_t \mid \bx_{t-1})$ as the proposal and no twist. \\
Filtering sequential Monte Carlo & FSMC & SMC with filtering distributions as targets. \\
\rowcolor{Gray}
SIXO-unified & SIXO-u & SIXO objective optimized with unbiased gradient ascent. \\
SIXO-quadrature & SIXO-q & SIXO objective with quadrature twist. \\
\rowcolor{Gray}
SIXO-density ratio estimation & SIXO-DRE & SIXO with density ratio estimate twist \\ \midrule %
BPF bound & $\mathcal{L}_{\mathrm{BPF}}^K$ & Log marginal likelihood bound from a $K$-particle BPF. \\
\rowcolor{Gray}
FIVO bound & $\mathcal{L}_{\mathrm{FIVO}}^K$ & FIVO bound \eqref{eq:methods:fivo} with $K$ particles. \\
SIXO bound & $\mathcal{L}_{\mathrm{SIXO}}^K$ & SIXO bound \eqref{equ:sixo_bound} with $K$ particles.\\
\rowcolor{Gray}
DRE loss & $\mathcal{L}_{\mathrm{DRE}}$ & Loss used to learn twist with DRE. \\
% %
% %
\bottomrule
\end{tabular}

\end{center}
\end{adjustwidth}
% \end{adjustbox}
\end{table}

\newpage

\section{Background}
\label{app:sec:background}

\subsection{Sequential Monte Carlo}
\label{app:sec:background:smc}
Sequential Monte Carlo (SMC) is a popular method for sampling from posterior distributions with sequential structure. For a thorough introduction we refer the reader to \citet{doucet2009tutorial} and \citet{naesseth2019elements}.  We reproduce the general SMC algorithm in Algorithm \ref{app:algo:smc}.

\begin{algorithm}
    \caption{Sequential Monte Carlo}
    \begin{spacing}{1.25}
    \begin{algorithmic}[1]
        \Procedure{SMC}{$\{\gamma_t(\bx_{1:t})\}_{t=1}^{T}$, $\{q_t(\bx_t \mid \bx_{1:t-1})\}_{t=1}^T$, $K$}
        \State $w_0^{1:K} = 1,\hspace{2mm} \widehat{Z}_0=1$
        \For {$t = 1,\ldots, T$}
            \For {$k = 1, \ldots, K$}
                \State $\bx_t^k \sim q_t(\bx_t | \bx_{1:t-1}^k)$
                \State $\bx_{1:t}^k = (\bx_{1:t-1}^k, \bx_t^k)$
                \State $\alpha_t^k = \dfrac{\gamma_t(\bx_{1:t}^k)}{\gamma_t(\bx_{1:t-1}^k)q_t(\bx_t^k \mid \bx_{1:t-1}^k)}$
                \State $w_t^k = w_{t-1}^k\alpha_t^k$
            \EndFor
            \State $\widehat{Z_t/Z_{t-1}} = \dfrac{\sum_{k=1}^K w_{t}^k}{\sum_{k=1}^K w_{t-1}^k}$
            \State $\widehat{Z}_t = \widehat{Z}_{t-1}(\widehat{Z_{t}/Z_{t-1}})$
            \If{should resample}
                \For{$k = 1, \ldots, K$}
                    \State $a_t^k \sim \text{Categorical}(w_t^{1:K})$
                    \State $\tilde{\bx}_{1:t}^k = \bx_{1:t}^{a_t^k}$
                \EndFor
                \State $\bx_{1:t}^{1:K} = \tilde{\bx}_{1:t}^{1:K}$
                \State $w_t^{1:K} = 1$
            \EndIf
        \EndFor
        \State\Return $\widehat{Z}_{T}$, $\bx_{1:T}^{1:K}$
        \EndProcedure
    \end{algorithmic}
    \end{spacing}
    \label{app:algo:smc}
\end{algorithm}

\subsection{Factoring the Smoothing Distributions}
\label{app:sec:background:twist-factor}
Here we show that the smoothing distributions can be factored into the filtering distributions and the lookahead distributions. 

Let $p_{\btheta}(\bx_{1:T}, \by_{1:T})$ be a model as defined in \eqref{equ:ssm}. Then
\begin{align}
    p_{\btheta}(\bx_{1:t}, \by_{1:T}) =& p_{\btheta}(\bx_{1:t}, \by_{1:t})p_{\btheta}(\by_{t+1:T} \mid \by_{1:t}, \bx_{1:t})\\
    =& p_{\btheta}(\bx_{1:t}, \by_{1:t})p_{\btheta}(\by_{t+1:T} \mid \bx_{t}),
\end{align}
where $\by_{t+1:T}$ is conditionally independent of $(\by_{1:t}, \bx_{1:t-1})$ given $\bx_t$ because of the Markov structure of $p_{\btheta}$.
\section{Methods}
\label{app:sec:methods}

\subsection{The Gradients of SMC}
\label{app:sec:methods:unified-grads}
The original FIVO papers~\citep{maddison2017filtering, naesseth2018variational, le2017auto} used biased gradients to optimize $\mathcal{L}_\mathrm{FSMC}$ which ignore score-function gradient terms arising from the discrete resampling operations. We use the same biased gradient estimator in SIXO-DRE when optimizing $\mathcal{L}_\mathrm{SIXO}(\btheta, \bpsi, \by_{1:T})$ in terms of $\btheta$, but for SIXO-u we use the full unbiased gradient estimator. We derive both estimators here.

Assume resampling occurs at each timestep, let $\mathbf{A}_t = a_t^{1:K}$ and $\BX_t = \bx_t^{1:K}$ be the ancestor indices and latents for all particles at time $t$, and let $\BA = (\BA_1, \ldots, \BA_{T-1})$ and $\BX = (\BX_1, \ldots, \BX_T)$ be the full sequence of ancestor indices and latents. We can then write the probability distribution over $\BX, \BA$ that defines SMC as
\begin{align}
p_\mathrm{SMC}(\BX,\BA) = \left(\prod_{k=1}^K q(\bx_1^k)\right) \prod_{t=2}^{T}\prod_{k=1}^K q(\bx_t^k|\bx_{1:t-1}^{a_{t-1}^k})\overline{\alpha}_{t-1}^{a_{t-1}^k}
\end{align}
where $\overline{\alpha}_t^i = \alpha_t^i/(\sum_{k=1}^K \alpha_t^k)$ is the normalized incremental weight. When the proposals and targets of SMC are parametric functions of $\btheta$, both $q$ and $\overline{\alpha}$ will depend on $\btheta$.

To emphasize its dependence on $\btheta$ when run with parametric twists and proposals we will write $p_\mathrm{SMC}(\BX, \BA)$ as $p(\BX, \BA; \btheta)$. Then the gradient of $\mathcal{L}_\mathrm{SIXO}$ is defined as
\begin{align}
    \nabla_{\btheta}(\mathcal{L}_\mathrm{SIXO}(\btheta)) =& \nabla_{\btheta} \E_{\BX,\BA \sim p(\BX, \BA; \btheta)}[\log \widehat{Z}(\BX, \BA, \btheta)]
\end{align}
where we have rewritten $\widehat{Z}_\mathrm{SIXO}(\btheta, \bpsi, \by_{1:T})$ as $\widehat{Z}(\BX,\BA,\btheta)$ to emphasize its dependence on the random variables $\BX$ and $\BA$ and suppress its dependence on $\bpsi$ and $\by_{1:T}$.

The first step is to reparameterize the expectation in terms of continuous noise instead of $\BX$~\citep{price1958useful,bonnet1964transformations,kingma2013auto,salimans2013fixed,rezende2014stochastic}. Assume $\BX$ is from a reparameterizable distribution and let $\phi_{\BX}(\btheta, \epsilon)$ be a function that deterministically combines continuous noise $\epsilon$ and the parameters $\btheta$ to produce a sample $\BX$. Then we have
\begin{align}
  \nabla_{\btheta} \E_{\BX,\BA}[\log \widehat{Z}(\BX, \BA, \btheta)] = \nabla_{\btheta} \E_{\epsilon,\BA}[\log \widehat{Z}(\phi_{\BX}(\btheta, \epsilon), \BA, \btheta)].\label{eq:app:reparam}
\end{align}
We will further abuse notation by writing $\widehat{Z}(\phi_{\BX}(\btheta, \epsilon), \BA, \btheta)$ as $\widehat{Z}(\epsilon, \BA, \btheta)$. Rewriting the expectation (\ref{eq:app:reparam}) as an integral gives
\begin{align}
    \nabla_{\btheta} \E_{\epsilon,\BA}[\log \hat{Z}(\epsilon, \BA, \btheta)] =& \nabla_{\btheta} \int \log \hat{Z}(\epsilon, \BA, \btheta)p(\epsilon, \BA; \btheta) d\epsilon d\BA.
\end{align}
Assuming that we can differentiate under the integral allows us to break the integrand apart using the product rule as
\begin{align}
    \int \nabla_{\btheta} (\log \widehat{Z}(\epsilon, \BA, \btheta))p(\epsilon, \BA; \btheta) +  \log \widehat{Z}(\epsilon, \BA, \btheta)\nabla_{\btheta} (p(\epsilon, \BA; \btheta))d\epsilon d\BA. \label{eq:app:product}
\end{align}
The left hand term in the integrand of (\ref{eq:app:product}) equals $\E_{\epsilon,\BA}[\nabla_{\btheta} \log \widehat{Z}(\epsilon, \BA, \btheta)]$, the expectation of a gradient that can be estimated using simple Monte Carlo. The right hand term is a ``score-function gradient''~\citep{williams1992simple} which can be rewritten using the fact that $\nabla_{\btheta}(\log f(\btheta)) = \nabla_{\btheta}(f(\btheta)) / f(\btheta)$ as
\begin{align}
    \int \log \widehat{Z}(\epsilon, \BA, \btheta)\nabla_{\btheta} (\log p(\epsilon, \BA; \btheta))p(\epsilon, \BA; \btheta)d\epsilon d\BA
\end{align}
which in turn equals the expectation
\begin{align}
\E_{\epsilon,\BA}\left[ \log \widehat{Z}(\epsilon, \BA, \btheta)\nabla_{\btheta} \log p(\epsilon, \BA; \btheta)\right].\label{eq:app:score-function}
\end{align}
Writing both terms together gives the full unbiased gradient that is amenable to estimation with simple Monte Carlo,
\begin{align}
    \E_{\epsilon,\BA}\left[\nabla_{\btheta} \log \widehat{Z}(\epsilon, \BA, \btheta) + \log \widehat{Z}(\epsilon, \BA, \btheta)\nabla_{\btheta} \log p(\epsilon, \BA; \btheta)\right].\label{eq:app:unbiased-grad}
\end{align}
Similar to prior work~\citep{maddison2017filtering,naesseth2018variational,le2017auto,lawson2018twisted} we find that the term on the right hand side of (\ref{eq:app:unbiased-grad}) has prohibitively high variance which inhibits learning. Dropping it gives the biased SMC gradient estimator used in SIXO-DRE,
\begin{align}
    \E_{\epsilon,\BA}[\nabla_{\btheta} \log \widehat{Z}(\epsilon, \BA, \btheta)],\label{eq:app:biased-grad}
\end{align}
which can be estimated using open-source autodiff software~\citep{jax2018github}.

The derivation above is adaptable for any resampling schedule that does not depend on the parameters (and by extension, the weights), but many common resampling schemes such as effective sample size resampling do not meet this requirement. If the resampling scheme depends on the parameters of the model, it introduces additional gradient terms which are not described here. Thus for SIXO-u experiments which use the full unbiased gradient (\ref{eq:app:unbiased-grad}), we use a fixed resampling schedule.

\subsection{Density Ratio Estimation}
\label{app:sec:methods:dre}
Density ratio estimation (DRE) considers estimating ratios of densities, e.g. $a(x)/b(x)$ with $a(x)$ and $b(x)$ defined on the same probability space and $b(x) > 0$ for all $x$. Instead of estimating $a(x)$ and $b(x)$ individually and then forming the ratio, an alternative approach is to directly estimate the \emph{odds} that a given sample of $x$ was drawn from $a$.

Let $p(x,z) = p(z)p(x \mid z)$ be an expanded generative model for $x$ defined as
\begin{align}
    z &\sim \mathrm{Bernoulli}(\alpha) , \\
    x &\sim a(x) \quad \mathrm{if} \quad z=1 , \\
    x &\sim b(x) \quad \mathrm{if} \quad z=0  
\end{align}
with $\alpha \in (0,1)$. We can now write the density ratio in terms of conditionals in this generative model,
\begin{align}
    a(x)/b(x) &= p(x \mid z=1)/p(x \mid z=0)\\
    &= \left(\frac{p(x)p(z=1 \mid x)}{p(z=1)}\right)/\left(\frac{p(x)p(z=0 \mid x)}{p(z=0)}\right) , \\
    &= \left(\frac{p(z=0)}{p(z=1)}\right)/\left(\frac{p(z=0 \mid x)}{p(z=1 \mid x)}\right) , \\
    &= \left(\frac{1-\alpha}{\alpha}\right) \left(\frac{p(z=1 \mid x)}{p(z=0 \mid x)}\right) .
\end{align}
Thus, the density ratio can be rewritten as proportional to the odds that $x$ was drawn from $a(x)$ instead of $b(x)$. 

\textbf{Density ratio estimation via classification} suggests training a binary classifier with supervised learning to predict $z$ given $x$ \cite{sugiyama2012density,mohamed2016learning}. Let $\sigma(x) =1/(1+e^{-x})$ be the sigmoid function and let $g_{\bpsi}(x)$ be a classifier trained with Bernoulli loss to maximize the log probability of a dataset $z_{1:N}, x_{1:N}$ sampled IID from $p(x, z)$. Specifically, $\bpsi$ is fit by minimizing $\mathcal{L}_\mathrm{DRE}(\bpsi)$, defined as
\begin{align}
\mathcal{L}_\mathrm{DRE}(\bpsi) \triangleq &\mathbb{E}_{z_{1:N}, x_{1:N} \sim p(x,z)}\left[\log \prod_{i=1}^N \mathrm{Bernoulli}(z ; \sigma(g_{\bpsi}(x))\right] , \\
= &\mathbb{E}_{z_{1:N}, x_{1:N} \sim p(x,z)}\left[\sum_{i=1}^N z_i\log(\sigma(g_{\bpsi}(x_i))) + (1-z_i)\log(\sigma(g_{\bpsi}(x_i)))\right] . 
\end{align}

If trained in this way, the raw output of $g_{\bpsi}(x)$ will approximate the log-odds that $x$ came from $a(x)$ instead of $b(x)$, i.e.
\begin{align}
    g_{\bpsi}(x) \approx \log \left(\frac{p(z=1 \mid x)}{1- p(z=1 \mid x)} \right) =  \log \left(\frac{p(z=1 \mid x)}{p(z=0 \mid x)} \right).  \label{app:equ:dre_logit_background}
\end{align}
The log of the density ratio can then be expressed as
\begin{align}
    \log a(x)/b(x) &=  \log(1-\alpha) - \log(\alpha) + \log\left(\frac{p(z=1 \mid x)}{p(z=0 \mid x)}\right) , \\
    &\approx \log(1-\alpha) - \log(\alpha) + g_{\bpsi}(x).
\end{align}

Assuming a fixed $\alpha$ parameter, the log-ratio of densities is then proportional to the logit produced by $g_{\bpsi}$ up to an additive constant. Thus as long as we can sample training pairs $(x, z)$ from the expanded generative model above, we can estimate ratios of densities by training a binary classifier.

\subsection{Alternating Density Ratio Twist Training}
\label{app:sec:methods:dre-twist}
To train the density ratio twist functions we define a supervised maximum likelihood update that is applied offline from the update to the model and proposal parameters.  This update is shown in Algorithm \ref{algo:sixo-dre}, labeled as DRE.

To define this update, we use the approach in Section \ref{app:sec:methods:dre} and set $a(x) = p_{\btheta}(\bx_{t} \mid \by_{t+1:T})$ and $b(x) = p_{\btheta}(\bx_{t})$.  To generate positive and negative examples for DRE, we first sample a set of $M$ latent state and observation trajectories from the generative model, $\bx_{1:T}^{1:M}, \by_{1:T}^{1:M} \sim p_{\btheta}(\bx_{1:T}, \by_{1:T})$.  Because these are samples from joint distribution they are also samples from the conditionals $\{p_{\btheta}(\bx_t \mid \by_{t+1:T})\}_{t=1}^{T-1}$.  We will refer to these samples as \emph{positive samples}.  We can then draw a second set of samples, but discard the observed data, $\tilde{\mathbf{x}}_{1:T}^{1:M} \sim p_{\btheta}(\bx_{1:T})$.  We will refer to these as \emph{negative samples}.  The sets of positive and negative examples form the data on which we will train the twist classifier. Generating examples sequentially in this manner is cheap and parallelizable, allowing us to use relatively large values of $M$.  

To train the RNN twist, we first pass the RNN \emph{backwards} over an observed data sample $\by_{1:T}^m$ to generate a sequence of encodings $\mathbf{e}_{1:T-1}^m$ (noting that we flip the resulting encodings so they are ``forward'' in time). To evaluate the probability of a positive classification we concatenate the encoding $\mathbf{e}_t^m$ with $\bx^m_{t}$ at each timestep, feed the result into a multi-layer perceptron (MLP) with scalar output, and take the output as a positive example Bernoulli logit. To evaluate the probability of a negative classification we take the same sequence of encodings, $\mathbf{e}_{1:T-1}$, concatenate them with $\tilde{\bx}^m_{1:T}$, feed the result at each timestep into the same MLP, and take the result as a negative example Bernoulli logit. These outputs are used to compute the cross-entropy loss as written in Algorithm \ref{algo:sixo-dre} at each timestep, which we average across time and across positive and negative examples to create the final loss.  This approach allows us to estimate the entire sequence of ratios $\{p_{\btheta}(\bx_t \mid \by_{t+1:T})/p_{\btheta}(\bx_t)\}_{t=1}^{T-1}$ using a single RNN backwards pass.

\subsection{The SIXO Bound Can Become Tight}
\label{app:sec:methods:sixo_bound:tightness}

\setcounter{thm}{0}
\begin{prop}
{\normalfont \textbf{(Reproduced from Section \ref{sec:methods:tightness})} Sharpness of the SIXO bound.} Let $p(\bx_{1:T}, \by_{1:T})$ be a latent variable model with Markovian structure as defined in Section \ref{sec:background}, let $\mathcal{Q}$ be the set of possible sequences of proposal distributions indexed by parameters $\btheta \in \Theta$, and let $\mathcal{R}$ be the set of possible sequences of positive, integrable twist functions indexed by parameters $\bpsi \in \Psi$. Assume that $\{p(\bx_t \mid \bx_{t-1}, \by_{1:T})\}_{t=1}^{T} \in \mathcal{Q}$ and $\{p(\by_{t+1:T} \mid \bx_t)\}_{t=1}^{T-1} \in \mathcal{R}$. Finally, assume $\mathcal{L}_\mathrm{SIXO}(\btheta, \bpsi, \by_{1:T})$ has the unique optimizer $\btheta^*, \bpsi^* = \argmax_{\btheta \in \Theta, \bpsi \in \Psi} \mathcal{L}_\mathrm{SIXO}(\btheta, \bpsi, \by_{1:T})$.

Then the following holds:
\begin{enumerate}
    \item $q_{\btheta^*}(\bx_t \mid \bx_{1:t-1}, \by_{1:T}) = p(\bx_t \mid \bx_{1:t-1}, \by_{1:T})$ \ for \ $t=1,\ldots, T$,
    \item $r_{\bpsi^*}(\by_{t+1:T}, \bx_t) \propto p(\by_{t+1:T} \mid \bx_t)$ up to a constant independent of $\bx_t$ \ for \ $t=1,\ldots, T-1$, 
    \item $\mathcal{L}_\mathrm{SIXO}(\btheta^*, \bpsi^*, \by_{1:T}) = \log p(\by_{1:T})$ for any number of particles $K \geq 1$.
\end{enumerate} 
\end{prop}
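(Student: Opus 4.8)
The plan is to show that the SIXO bound equals $\log p(\by_{1:T})$ whenever the proposals are the conditionals $p(\bx_t \mid \bx_{1:t-1}, \by_{1:T})$ and the twists are proportional to the lookahead distributions $p(\by_{t+1:T} \mid \bx_t)$, and then argue that this configuration is the maximizer. By hypothesis these choices lie in $\mathcal{Q}$ and $\mathcal{R}$, so they correspond to some $(\btheta, \bpsi)$, and by uniqueness of the optimizer it suffices to show (a) the bound is \emph{always} $\le \log p(\by_{1:T})$ (this is Equation~\eqref{equ:sixo_bound}, already proved) and (b) at this particular $(\btheta, \bpsi)$ the bound \emph{attains} $\log p(\by_{1:T})$. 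Then $(\btheta^*, \bpsi^*)$ must coincide with this configuration, which gives all three claims simultaneously.

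\textbf{First I would} analyze the weights in an SMC sweep (Algorithm~\ref{app:algo:smc}) run with these targets and proposals. The unnormalized target is $\gamma_t(\bx_{1:t}) = p(\bx_{1:t}, \by_{1:t})\, r(\by_{t+1:T}, \bx_t)$ with $r(\by_{t+1:T}, \bx_t) = c_t\, p(\by_{t+1:T} \mid \bx_t)$ for constants $c_t$ independent of $\bx_t$ (and $r(\cdot, \bx_T) = 1$). The incremental weight at step $t$ is
\begin{align}
\alpha_t^k = \frac{\gamma_t(\bx_{1:t}^k)}{\gamma_{t-1}(\bx_{1:t-1}^k)\, q_t(\bx_t^k \mid \bx_{1:t-1}^k)}.
\end{align}
Substituting $\gamma_t = p(\bx_{1:t}, \by_{1:t})\,c_t\,p(\by_{t+1:T} \mid \bx_t)$ and $q_t(\bx_t \mid \bx_{1:t-1}) = p(\bx_t \mid \bx_{1:t-1}, \by_{1:T})$, and using the factorization $p(\bx_t \mid \bx_{1:t-1}, \by_{1:T}) = p(\bx_{1:t}, \by_{1:T}) / p(\bx_{1:t-1}, \by_{1:T})$ together with the smoothing factorization $p(\bx_{1:t}, \by_{1:T}) = p(\bx_{1:t}, \by_{1:t})\, p(\by_{t+1:T} \mid \bx_t)$ from Appendix~\ref{app:sec:background:twist-factor}, the state-dependent parts cancel and $\alpha_t^k$ reduces to a deterministic constant (the ratio $c_t/c_{t-1}$ times the appropriate marginal-likelihood increment). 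Because all weights within a timestep are equal and constant, resampling is trivial and $\widehat{Z}_\mathrm{SIXO}$ telescopes \emph{deterministically} to $p(\by_{1:T})$, regardless of $K$. Hence $\mathbb{E}[\log \widehat{Z}_\mathrm{SIXO}] = \log p(\by_{1:T})$, establishing (b) and the third claim for all $K \ge 1$.

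\textbf{The main obstacle} is the uniqueness/identifiability step: the proposition \emph{assumes} a unique optimizer, so what must be checked is that the optimal proposal must be exactly $p(\bx_t \mid \bx_{1:t-1}, \by_{1:T})$ and the optimal twist must be exactly proportional to $p(\by_{t+1:T} \mid \bx_t)$ — i.e. that no other element of $\mathcal{Q} \times \mathcal{R}$ also achieves the value $\log p(\by_{1:T})$. Here I would invoke the characterization of when a Monte Carlo objective is tight: $\mathbb{E}[\log \widehat{Z}] = \log \mathbb{E}[\widehat{Z}]$ by strict concavity of $\log$ and Jensen's inequality forces $\widehat{Z}_\mathrm{SIXO}$ to be \emph{almost surely constant} (equal to $p(\by_{1:T})$). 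I would then argue that an SMC estimator with nontrivial resampling and randomness in the weights cannot be a.s. constant unless the incremental weights $\alpha_t^k$ are themselves a.s. constant in $\bx_{1:t}$, which — reversing the cancellation computation above — pins the ratio $\gamma_t(\bx_{1:t}) / (\gamma_{t-1}(\bx_{1:t-1}) q_t(\bx_t \mid \bx_{1:t-1}))$ to be state-independent for each $t$. Solving this recursively (starting from $\gamma_T \propto p(\bx_{1:T}, \by_{1:T})$ and the boundary twist $r(\cdot, \bx_T)=1$) forces $q_t = p(\bx_t \mid \bx_{1:t-1}, \by_{1:T})$ and $r(\by_{t+1:T}, \bx_t) \propto p(\by_{t+1:T} \mid \bx_t)$ for every $t$, which — given that these lie in $\mathcal{Q}$ and $\mathcal{R}$ and the optimizer is unique — yields claims 1 and 2. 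I would be careful to handle the resampling-schedule technicality (the cancellation argument should be stated for resampling at every step, or noted to be schedule-independent since the weights are constant), and to track that the twist constant $c_t$ is genuinely free, matching the ``up to a constant independent of $\bx_t$'' qualifier in claim 2.
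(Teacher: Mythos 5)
Your proof follows the paper's argument essentially exactly: establish the Jensen upper bound, substitute the optimal proposals and twists into the incremental weights so that the smoothing factorization $p(\bx_{1:t},\by_{1:T}) = p(\bx_{1:t},\by_{1:t})\,p(\by_{t+1:T}\mid\bx_t)$ makes every $\alpha_t^k$ a state-independent constant and $\widehat{Z}_\mathrm{SIXO}$ telescopes deterministically to $p(\by_{1:T})$ for any $K$, then invoke uniqueness of the optimizer. The only substantive difference is your final ``identifiability'' step, which is unnecessary: once one configuration in $\mathcal{Q}\times\mathcal{R}$ attains the universal upper bound $\log p(\by_{1:T})$, the assumed uniqueness of the $\argmax$ already forces $(\btheta^*,\bpsi^*)$ to be that configuration, so there is no need to argue that no other choice can make $\widehat{Z}_\mathrm{SIXO}$ almost surely constant (and the paper does not).
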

\begin{proof}
First we reproduce the proof in the main text that $\mathcal{L}_\mathrm{SIXO}(\btheta, \bpsi, \by_{1:T}) \leq p(\by_{1:T})$ for any setting of $\btheta, \bpsi$. As previously,  fix $r_{\bpsi}(x_T) = 1$ and let $\widehat{Z}_\mathrm{SIXO}(\btheta, \bpsi, \by_{1:T})$ be the normalizing constant estimator returned by SMC run with targets $\{p(\bx_{1:t}, \by_{1:t})r_{\bpsi}(\by_{t+1:T},\bx_t)\}_{t=1}^T$ and proposals $\{q_{\btheta}(\bx_t \mid \bx_{1:t-1}, \by_{1:T}))\}_{t=1}^T$. Then,
\begin{align}
  \mathcal{L}_\mathrm{SIXO}(\btheta, \bpsi, \by_{1:T}) &\triangleq \mathbb{E}\left[ \log \widehat{Z}_\mathrm{SIXO}(\btheta, \bpsi, \by_{1:T})\right]\\
  &\leq \log \mathbb{E}\left[\widehat{Z}_\mathrm{SIXO}(\btheta, \bpsi, \by_{1:T})\right]\label{eq:app:jensens-bound}\\
  &= \log p(\by_{1:T}), \label{eq:app:marginal-bound}
\end{align}
where \eqref{eq:app:jensens-bound} holds by Jensen's inequality and the concavity of $\log$, and \eqref{eq:app:marginal-bound} holds by the unbiasedness of SMC's marginal likelihood estimator.

Because $\mathcal{L}_\mathrm{SIXO}(\btheta, \bpsi, \by_{1:T}) \leq p(\by_{1:T})$ and we assume $\mathcal{L}_\mathrm{SIXO}(\btheta, \bpsi, \by_{1:T})$ has a unique optimizer, any setting of $\btheta$ and $\bpsi$ that causes the bound to hold with equality must be $\btheta^*, \bpsi^* = \argmax_{\btheta, \bpsi} \mathcal{L}_\mathrm{SIXO}(\btheta, \bpsi, \by_{1:T})$. Thus, we conclude the proof by showing that the twists $\{p(\by_{t+1:T} \mid \bx_t)\}_{t=1}^{T-1}$ and proposals $\{ p(\bx_t \mid \bx_{1:t-1}, \by_{1:T})\}_{t=1}^T$ cause the bound to hold with equality.

We proceed by induction on $t$, the timestep in the SMC sweep. We will show that for $t=1,\ldots,T$, $\widehat{Z}_t = p(\by_{1:T})$. In proving this we will also show that for each $t$, $w_t^{k}$ either equals $1$ or $p(\by_{1:T})$ for $k=1,\ldots,K$, depending on whether resampling occurred.

For $t=1$ note that
\begin{enumerate}
    \item $\gamma_1(\bx_1^k) = p(\bx_1^k, \by_1)p(\by_{2:T} \mid \bx_1^k)$,
    \item $\gamma_0 \triangleq 1$,
    \item and $q_1(\bx_1^k) = p(\bx_1 \mid \by_{1:T})$.
\end{enumerate}
Taken together this implies that the incremental weight $\alpha_1^k$ is
\begin{align}
\alpha_1^k = \frac{p(\bx_1^k, \by_1)p(\by_{2:T} \mid \bx_1^k)}{p(\bx_1^k \mid \by_{1:T})} = \frac{p(\bx_1^k, \by_{1:T})}{p(\bx_1^k \mid \by_{1:T})} = p(\by_{1:T})
\end{align}
which does not depend on $k$. Because $w_0^k \triangleq 1$, we have that $w_1^k = w_0^k\alpha_1^k = p(\by_{1:T})$ for all $k$. This in turn implies
\begin{align}
    \widehat{Z_1/Z_0} = \frac{\sum_{k=1}^K w_1^k}{\sum_{k=1}^K w_0^k} = \frac{Kp(\by_{1:T})}{K} = p(\by_{1:T}),
\end{align}
which when combined with the fact that $\widehat{Z}_0 \triangleq 1$ yields
\begin{align}
    \widehat{Z}_1 = \widehat{Z}_0(\widehat{Z_1/Z_0}) = p(\by_{1:T}).
\end{align}

If resampling occurs at the end of step 1, all weights $w_1^{1:K}$ will be set to $1$. Thus we have shown that $\widehat{Z}_1 = p(\by_{1:T})$ and $w_1^{1:K} = p(\by_{1:T})$ or $1$.

Now assume that $\widehat{Z}_{t-1} = p(\by_{1:T})$ and $w_{t-1}^{1:K}$ equals $1$ or $p(\by_{1:T})$. Again, we derive the incremental weights $\alpha_t^k$ by noting that
\begin{enumerate}
    \item $\gamma_t(\bx_{1:t}^k) = p(\bx_{1:t}^k, \by_{1:t})p(\by_{t+1:T} \mid \bx_t^k)$,
    \item $\gamma_{t-1}(\bx_{1:t-1}^k) = p(\bx_{1:t-1}^k, \by_{1:t-1})p(\by_{t:T} \mid \bx_{t-1}^k)$,
    \item and $q_t(\bx_t^k) = p(\bx_{t}^k \mid \bx_{1:t-1}^k, \by_{1:T})$
\end{enumerate}
which yields $\alpha_t^k$ as
\begin{align}
    \alpha_t^k &= \frac{p(\bx_{1:t}^k, \by_{1:t})p(\by_{t+1:T} \mid \bx_t^k)}{p(\bx_{1:t-1}^k, \by_{1:t-1})p(\by_{t:T} \mid \bx_{t-1}^k)p(\bx_{t}^k \mid \bx_{1:t-1}^k, \by_{1:T})}\\
    &= \frac{p(\bx_{1:t}^k, \by_{1:T})}{p(\bx_{1:t-1}^k, \by_{1:T})p(\bx_{t}^k \mid \bx_{1:t-1}^k, \by_{1:T})}\\
    &= \frac{p(\bx_{1:t-1}^k, \by_{1:T})p(\bx_{t}^k \mid \bx_{1:t-1}, \by_{1:T})}{p(\bx_{1:t-1}^k, \by_{1:T})p(\bx_{t}^k \mid \bx_{1:t-1}^k, \by_{1:T})}\\
    &= 1
\end{align}
for $k=1,\ldots,K$.

Now there are two cases depending on the value of the weights at the previous timestep. If $w_{t-1}^{1:K} = 1$, then $w_t^k = w_{t-1}^kw_t^k = 1$ for all $k$, implying that $\widehat{Z_t/Z_{t-1}} = 1$. Alternatively, if $w_{t-1}^{1:K} = p(\by_{1:T})$ then $w_t^k = p(\by_{1:T})$ for all $k$ which also implies that $\widehat{Z_t/Z_{t-1}} = 1$. Given that $\widehat{Z_t/Z_{t-1}} = 1$ in both cases, and that $\widehat{Z}_{t-1} = p(\by_{1:T})$, we have that
\begin{align}
    \widehat{Z}_t = \widehat{Z}_{t-1}(\widehat{Z_t/Z_{t-1}}) = p(\by_{1:T}).
\end{align}
Finally, if resampling occurs then the weights $w_t^{1:K}$ will be set to 1. Thus we have shown that $\widehat{Z}_t = p(\by_{1:T})$ and $w_t^{1:K} = p(\by_{1:T})$ or 1 for each $t=1,\ldots,T$.
\end{proof}

Note that we have incidentally shown that all weights are equal at each step of SMC for the optimal proposals and twisting functions. This implies that the variance of the importance weights is minimized (i.e. is 0), and if effective sample size is used to trigger resampling, resampling will never occur.

\section{Experiments}
\label{app:sec:experiments}
Code for reproduction of all experiments is released here: \url{https://github.com/lindermanlab/sixo}.  Links to Weights and Biases experimental logs are also contained in the GitHub repository.

\subsection{Gaussian Drift Diffusion}
\label{app:sec:experiments:gdd}

\subsubsection{Model Details}
The one-dimensional Gaussian drift-diffusion process has joint distribution:
\begin{align*}
    p_{\btheta} (\bx_{1:T}, \by_{1:T}) \, &= \,  p_{\btheta} \left(\bx_{1:T}, y_T \right) \, = \, p_{\btheta}(x_1) \left(\prod_{t=2}^T p_{\btheta} (x_{t} \mid x_{t-1})\right) p_{\btheta}(y_T  \mid  x_T), \\
    &= \, \cn \left(x_1 \ ; \alpha, 1 \right) \left(\prod_{t=2}^T \cn \left(x_{t} \ ; \ x_{t-1} + \alpha, 1 \right) \right) \cn \left(y_T \ ; \ x_T + \alpha, 1 \right),
\end{align*}
where the free parameters of the model are $\btheta = \left\lbrace \alpha \right\rbrace \in \BTheta = \mathbb{R}$, the state is $\bx_t \in \cx = \mathbb{R}$, and the observed data are $\by_{1:T} = y_T \in \mathbb{R}$.  Training data are sampled from this joint distribution with $\alpha=1$. Note that the distributions we show in Figure \ref{fig:banner} were generated with $\alpha = 0$.

\subsubsection{Analytic Forms} 
\label{analytic_forms_sec}
The $t$\textsuperscript{th} marginal of the filtering distribution for $t < T$ is
\begin{align}
    p_{\btheta} (x_t) \, &= \, \cn ( x_t\ ;\ t \alpha, t).
\end{align}

The $t$\textsuperscript{th} marginal of the smoothing distributions can be derived as follows:
\begin{align}
    p(x_t \mid y_T) &\propto p(x_t)p(y_T \mid x_t), \\
    &= \cn \left( x_t; t\alpha, t \right) \, \cn \left(y_T; x_t + \alpha(T-t+1), T-t+1\right), \nonumber \\
    &= \cn \left( x_t; t\alpha, t \right) \, \cn \left(x_t; y_T - \alpha(T-t+1), T-t+1\right). \label{app:equ:smooth1}
\end{align}
Noting that the product of two Gaussian densities is also Gaussian:
\begin{align*}
\cn \left( x; \mu_1, \sigma_1^2 \right) \cn \left( x; \mu_2, \sigma_2^2 \right) \propto \cn \left( x; \frac{\sigma_2^2\mu_1 + \sigma_1^2\mu_2}{\sigma_1^2 + \sigma_2^2}, \frac{\sigma_1^2\sigma_2^2}{\sigma_1^2 + \sigma_2^2} \right), 
\end{align*}
allows us to combine the two Gaussian distributions in \eqref{app:equ:smooth1}:
\begin{align*}
    p(x_t \mid y_T) &\propto \cn \left( x_t; \frac{(T-t+1)t\alpha + t(y_T-\alpha(T-t+1))}{t + (T-t+1)}t, \frac{t(T-t+1)}{t+(T-t+1)}\right)\\
    &= \cn \left( x_t; \frac{t}{T+1}y_T, \frac{t(T-t+1)}{T+1}\right).
\end{align*}
Hence the smoothing distribution is a Gaussian distribution with analytically computable mean and variance terms. 

The filtering and smoothing distributions are equal at $t=T$.  It is interesting to note that for $\alpha = \frac{y_T}{T+1}$, which is the maximum likelihood drift parameter for a \emph{single} datapoint $y_T$, the sequence of filtering and smoothing distributions have the same means.  However, the variances are different for all $t < T$; in particular, the smoothing distribution variance peaks in the middle of the timeseries, whereas the variance of the filtering distribution is monotonically increasing for $t<T$, and then drops at $t=T$.

According to Proposition \ref{prop:sixo}, we expect the proposal recovered by SIXO, $q_{\theta_t}$, to match the conditional of the smoothing distribution:
\begin{align*}
    q_{\btheta_1} (x_1 \mid y_T) &= p_{\btheta} (x_1 \mid y_T) \, \quad \quad \, \, = \, \cn \left( x_1\ ;\ \frac{y_T}{T+1}, \frac{T}{T+1}\right) \quad \text{ for } t = 1, \\
    q_{\btheta_t} (x_t \mid x_{t-1}, y_T) &= p_{\btheta} (x_t \mid x_{t-1}, y_T) \, = \, \cn \left( x_t\ ;\ \frac{(T-t+1)x_{t-1} + y_T}{T-t+2}, \frac{T-t+1}{T-t+2}\right) \ \text{ otherwise }.
\end{align*}
Note that the mean is an affine function with bias equal to zero.  

Furthermore, we also expect the optimal twist distribution to be equal to the true lookahead distribution:
\begin{align}
    r_{\bpsi_t}(y_T \mid x_t) \, = \, p_{\btheta} (y_T \mid x_t) \, = \, \cn \left( y_T\ ;\ x_t + \alpha(T-t+1), T-t+1\right) \quad \forall t \in 1, \ldots, T-1. \label{app:equ:gdd:tilt}
\end{align}

\subsubsection{SIXO Variants for the Gaussian Drift Diffusion}

\begin{figure}[t]
    \centering
    \includegraphics{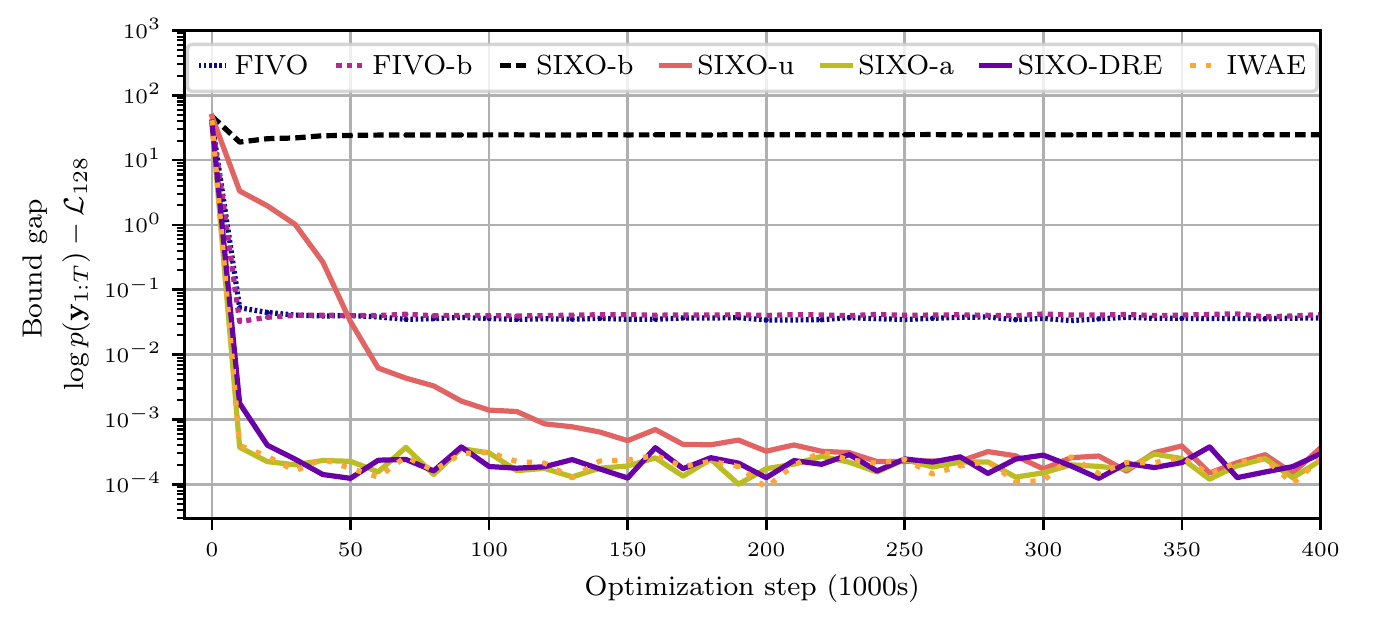}
    \vspace*{-0.5cm}
    \caption{Convergence of the bound for all methods discussed in Section \ref{app:sec:experiments:gdd}. Of these lines, FIVO-b, SIXO-b and SIXO-a were omitted from Figure \ref{fig:exp:gdd:results} the main text.  }
    \label{app:fig:full_bounds}
\end{figure}

For all experiments we use a Gaussian proposal at each timestep, parameterized as $q_{\btheta_t}(x_t \mid x_{t-1}, y_T) = \mathcal{N}(x_t; f_t(x_{t-1}, y_T), \sigma_{qt}^2)$ where $f_t$ is a general affine function of $x_{t-1}$ and $y_T$.  There are therefore $4T-1$ proposal parameters to learn ($T$ biases, $T$ $y_T$ coefficients, $T-1$ $x_{t-1}$ coefficients, and $T$ variances $\sigma_{qt}^2$).

We test four variants of SIXO:
\begin{enumerate}
    \item \textbf{SIXO-u} learns $\btheta$ and $\bpsi$ by gradient ascent on the unified bound given in \eqref{equ:sixo_bound} using the unbiased gradients (reparameterization and score-function gradients). We parameterize the twists as $r_t(y_T, x_t) = \mathcal{N}(y_T; g_t(x_t), \sigma_{rt}^2)$ for $t < T$, where $g_t$ is a learnable affine function and $\sigma_{rt}^2$ is also learned.
    \item \textbf{SIXO-DRE} as defined in Algorithm \ref{algo:sixo-dre} learns $\btheta$ by ascending the bound \eqref{equ:sixo_bound} \emph{using the biased reparameterization gradients} as in \eqref{eq:app:biased-grad}. The twist parameters $\bpsi$ are then fit using a density ratio update. The twist is parameterized as an MLP that produces the coefficients of a quadratic function over $\bx_t$ as a function of $y_T$ and $t$.  This quadratic function is evaluated at $\bx_t$ to compute the $\log r_{\psi}$ value.  
    \item \textbf{SIXO-a} (not included in Figure \ref{fig:exp:gdd:results}) uses the analytic form for the twist as a function of $\btheta$ and $y_T$ (specified in \eqref{app:equ:gdd:tilt}).  There are no free parameters to learn for this twist, and $\btheta$ is learned by ascending the bound \eqref{equ:sixo_bound} using the biased reparameterization gradients in \eqref{eq:app:biased-grad}.  
    \item \textbf{SIXO-b} (not included in Figure \ref{fig:exp:gdd:results}) learns $\btheta$ and $\bpsi$ by gradient ascent on the unified bound given in \eqref{equ:sixo_bound} using biased reparameterization gradients \eqref{eq:app:biased-grad}. We parameterize the twists as $r_t(y_T, x_t) = \mathcal{N}(y_T; g_t(x_t), \sigma_{rt}^2)$ for $t < T$, where $g_t$ is a learnable affine function and $\sigma_{rt}^2$ is also learned.
\end{enumerate}
Note that the true distributions lie within the variational families (assuming a sufficiently expressive MLP for SIXO-DRE, which is not unreasonable).  In all of these models we initialize the parameter $\alpha = 0$.  

\begin{figure*}[h!]
    \centering
    
    \begin{subfigure}[t]{0.5\textwidth}
        \includegraphics[width=\textwidth]{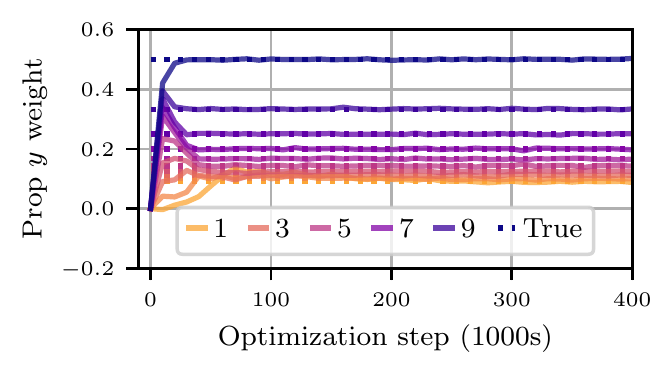}
        \vspace*{-0.5cm}
    \end{subfigure}%
    \hfill%
    \begin{subfigure}[t]{0.5\textwidth}
        \includegraphics[width=\textwidth]{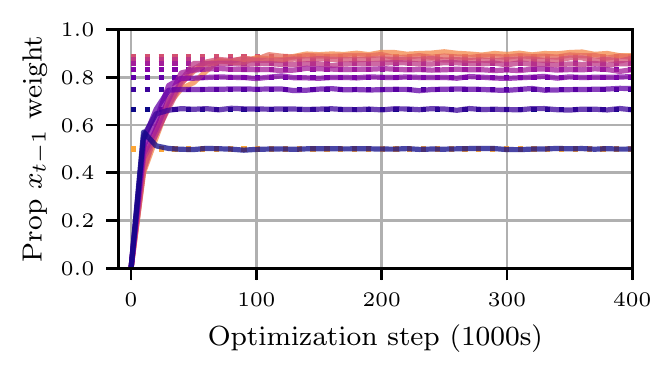}
        \vspace*{-0.5cm}
    \end{subfigure}%
    
    \begin{subfigure}[t]{0.5\textwidth}
        \includegraphics[width=\textwidth]{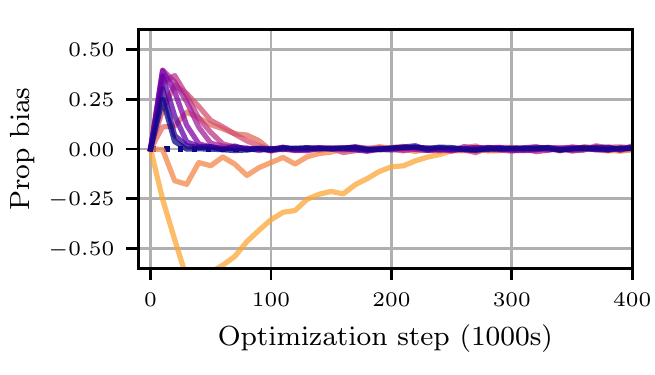}
        \vspace*{-0.5cm}
    \end{subfigure}%
    \hfill%
    \begin{subfigure}[t]{0.5\textwidth}
        \includegraphics[width=\textwidth]{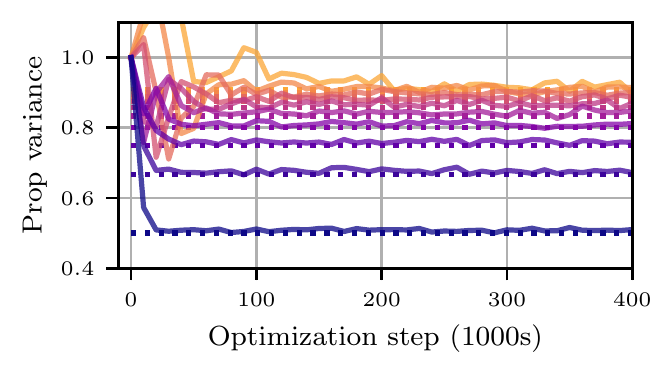}
        \vspace*{-0.5cm}
    \end{subfigure}%
    
    \begin{subfigure}[t]{0.5\textwidth}
        \includegraphics[width=\textwidth]{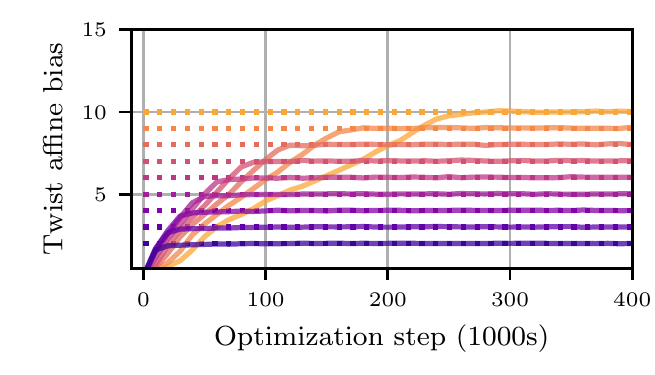}
        \vspace*{-0.5cm}
    \end{subfigure}%
    \hfill%
    \begin{subfigure}[t]{0.5\textwidth}
        \includegraphics[width=\textwidth]{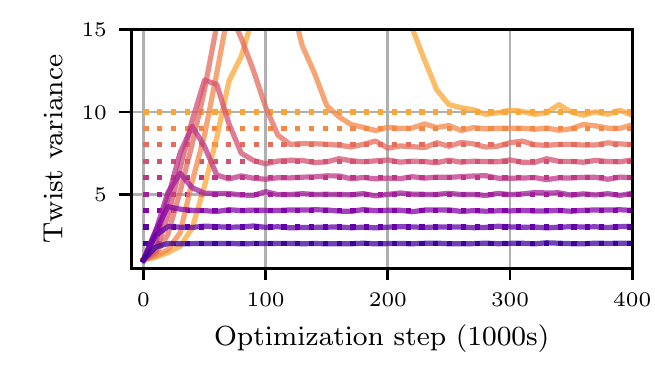}
        \vspace*{-0.5cm}
    \end{subfigure}%
    
    \begin{subfigure}[t]{0.5\textwidth}
        \includegraphics[width=\textwidth]{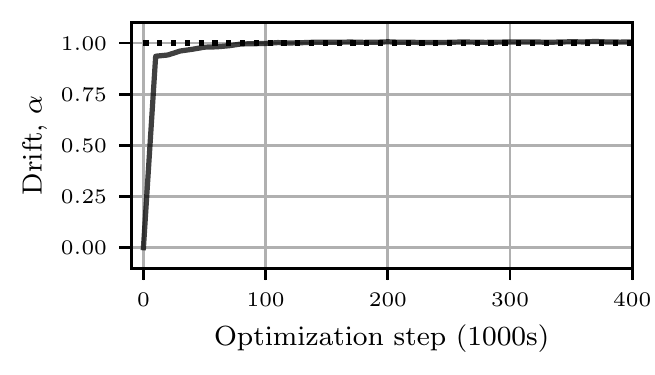}
        \vspace*{-0.5cm}
    \end{subfigure}
    
    \caption{Convergence for all free parameters in the GDD experiment using SIXO-u.  The drift parameter ($\alpha$) is constant across time.  Coloring of lines indicates the time $t \in 1, \ldots, 10$.}
    \label{fig:exp:gdd_full}
\end{figure*}

\subsubsection{Results}

We compare the four different variants of SIXO to IWAE, FIVO with biased gradients, and FIVO with unbiased gradients.  For clarity, we omitted some of these comparisons from Figure \ref{fig:exp:gdd:results} in the main text, but include them here in Figure \ref{app:fig:full_bounds}.  Individual seeds for each experiment were run using two CPU cores, 8Gb of memory, and had a runtime of no longer than five hours.

The model drift $\alpha$ is initialized to zero, affine function weights are initialized to zero, affine function biases are initialized to zero, and $\sigma_{qt}^2$ and $\sigma_{rt}^2$ are initialized to one.

In Figure \ref{app:fig:full_bounds} we show the convergence of the bound across all seven methods we considered. FIVO with unbiased gradients (FIVO) performs comparably to FIVO with biased gradients (FIVO-b), however, converges slightly more slowly.  We find that SIXO-b performed worse than all the other methods, similar to the results in \cite{lawson2018twisted}.  We therefore omitted it from the main paper for brevity.  However, this result motivated us to find alternative methods.  

All of SIXO-u, SIXO-DRE and SIXO-a converge to the correct solution and achieve a tight variational bound (parameter convergence is shown in Figure \ref{fig:exp:gdd_full} for SIXO-u).  SIXO-u converges to a tight bound most slowly, but SIXO-DRE and SIXO-a converge quickly at a rate similar to IWAE.  This result shows that SIXO is able to recover the optimal model, true posterior, and true twists, and that it can recover a tight variational bound.  It also shows that SIXO-DRE converges in a rate commensurate with the best-case convergence of SIXO-a.  Crucially, it shows that by using SIXO-DRE we can circumvent the need for the high-variance score-function gradient terms, and still recover the correct solution.  Future work will investigate whether the use of a twist makes the reparameterization gradient less biased.

Figures \ref{fig:banner:lineages:fivo} and \ref{fig:banner:lineages:sixo} show the lineages for two SMC sweeps: one using a model and proposal learned using FIVO, and one using a model, proposal and twist learned using SIXO-u.  We resample at every timestep using systematic resampling.  In each color we show the unweighted filtering particles prior to resampling (i.e. the propagated particles) at time $t$, and then the smoothing particles after performing a backwards pass from that timestep.  As such, we are showing the family of smoothing and predictive distributions.  We see that FIVO proposes particles towards the observation, but that these particles are often preferentially resampled back towards the prior distribution.  This results in gross particle degeneracy.  In contrast, the lineages for SIXO are nearly perfect, with every particle being resampled exactly once.  

\subsection{Stochastic Volatility Model (SVM)}
\label{app:sec:experiments:svm}

\subsubsection{Model Details}
The SVM models an $N$-dimensional state-space is defined as follows:
\begin{align}
    p_{\btheta}\left(\bx_{1:T}, \by_{1:T} \right) &= p_{\btheta} \left( \bx_1 \right) p_{\btheta} \left( \by_1  \mid  \bx_1 \right) \prod_{t=2}^T p_{\btheta}( \bx_t \mid \bx_{t-1}) p_{\btheta}( \by_t \mid \bx_t), \\  
    \bx_1 \sim \mathcal{N} \left( \mathbf{0}, \mathbf{Q} \right) & , \quad 
    \bx_t = \bmu + \bphi \left( \bx_{t-1} - \bmu \right) + \boldsymbol\nu_t , \quad \by_t = \boldsymbol\beta \ \mathrm{exp} \left( \frac{\bx_t}{2} \right) \mathbf{e}_t , 
\end{align}
where the states and observations are defined as:
\begin{align}
    \bx_{1:T} = \bx_{1:T} \in \mathcal{X}^T = \mathbb{R}^{T \times N} , \quad \by_{1:T} = \by_{1:T} \in \mathcal{Y}^T = \mathbb{R}^{T \times N} , 
\end{align}
the transition and observation noise terms are defined as:
\begin{align}
    \boldsymbol\nu_t \sim \mathcal{N} \left( \mathbf{0}^{N}, \mathbf{Q} \right) , \quad \mathbf{e}_t \sim \mathcal{N} \left( \mathbf{0}^{N}, \mathbf{1}^{N} \right).
\end{align}
and all multiplications are performed element-wise.  The model has free parameters defined as:
\begin{align}
    \theta = \left\lbrace \bmu, \bphi, \boldsymbol\beta , \mathbf{Q} \right\rbrace, \quad \mathrm{where}  \quad  \bmu \in \mathbb{R}^N , \quad \bphi \in \left[ 0, 1 \right]^N , \quad \boldsymbol\beta \in \mathbb{R}^N_+ , \quad \mathbf{Q} \in \mathrm{diag}(\mathbb{R}^N_+) ,
\end{align}
such that there are $4N$ model parameters.  The model learning objective is to recover the free parameters, $\btheta$, given observed data $\by$.  The data we consider are the monthly returns from $N=22$ currencies over the period from 9/2007 to 8/2017, transformed into the log domain.  As a result, $\mathcal{X} = \mathcal{Y} = \mathbb{R}^{119 \times 22}$.

\subsubsection{Results}
We use a fixed proposal per timestep parameterized as a Gaussian perturbation to $p_{\btheta}$ as in \cite{naesseth2018variational},
\begin{align}
q_{\btheta}(\bx_t  \mid  \bx_{t-1}, \by_{1:T}) \propto p_{\btheta}(\bx_t  \mid  \bx_{t-1})\mathcal{N}(\bx_t; \boldsymbol{\mu}_t, \mathbf{\Sigma}_t).
\end{align}
Thus the parameters specific to $q$ are the means $\boldsymbol{\mu}_t \in \mathbb{R}^N$ and covariance matrices $\mathbf{\Sigma}_t \in \mathrm{diag}(\mathbb{R}^N_+)$ for $t=1,\ldots,T$.

SIXO-q uses a parameterless quadrature twist, so we skip twist optimization and use biased reparameterization gradients of the SIXO bound to fit the model and proposal. For the quadrature twist we use Gauss-Hermite quadrature with degree five.

For SIXO-DRE, we model the twist using the backwards RNN method introduced in Section \ref{sec:methods:dre}.  The twist is parameterized with a one-layer RNN with 128 hidden units and a one-layer MLP with 128 hidden units.  The twist is learned using the alternating DRE method described in Section \ref{sec:methods:dre}.  We generate a batch of 32,000 synthetic trajectories from the current model, and perform minibatch stochastic gradient descent using the ADAM optimizer~\citep{kingma2014adam} with a learning rate of $3\times 10^{-3}$ and a minibatch size of 64.  We apply 1,000 twist updates (corresponding to two epochs) and then apply 1,000 updates to the model and proposal.  The model and proposal parameters are updated using the ADAM optimizer~\citep{kingma2014adam} with a learning rate of $1\times 10^{-4}$.  We use four particles per SMC sweep, and average across four datasets per model and proposal update.  

Individual FIVO and SIXO-q seeds for each experiment were run using four CPU cores, 24Gb of memory, and had a wallclock time of no longer than twenty four hours.  SIXO-DRE had a longer runtime of five days, however, competitive results were achieved within two days.  

\paragraph{Train Bound Performance}  We compare three methods:  FIVO (learning $\btheta$), SIXO-q (learning $\btheta$ with a quadrature twist) and SIXO-DRE (learning $\btheta$ and $\bpsi$ using a DRE twist).  All methods use the biased score-function gradients of \eqref{eq:app:biased-grad}.  We show the median and quartiles across five random seeds.  Each $\mu_n$ is initialized from $\mathcal{N}(0, 0.3)$ (with $n \in 1, \ldots 22$).  $\phi_n$ is learned in the unconstrained space $\mathbb{R}$, and is transformed to $[0,1]$ by passing the raw $\phi$ through a hyperbolic tangent function. The unconstrained $\phi_n$'s are initialized from $\mathcal{N}(\mathrm{arctanh}(0.1), 0.3)$, and  $\log \beta_n$ is initialized from $\mathcal{N}(\log 1.0, 0.3)$.  Finally, $\log Q_n$ is initialized from $\mathcal{N}(\log 1.0 , 0.3)$.  The train bound value we report is taken as the average train bound once converged.  We also report $\mathcal{L}_{\mathrm{BPF}}^{2048}$ as the bound evaluated using a BPF with 2,048 particles.  This tests the performance of the learned model opposed to inference performance.

A one-way ANOVA \cite{fisher1992statistical} rejected the null hypothesis that the mean train bounds are equal ($p < 0.0001$), and a post-hoc Dunnett T3 test~\citep{dunnett1980pairwise} found the mean SIXO-DRE bound to be the highest ($p < 0.01$ for all pairs). For the $\mathcal{L}_{\mathrm{BPF}}^{2048}$ values, a one-way ANOVA failed to reject the null hypothesis that the train bounds are equal ($p = 0.25 > 0.05$), so all entries are bolded.  This methodology was recommended in \citet{sauder2019updated}.

\paragraph{Test Set Performance}  We also report the performance on a held-out dataset constructed from the new data since \citet{naesseth2018variational} was published.  The test $\mathcal{L}_{\mathrm{BPF}}^{2048}$ we report is the bound evaluated using a BPF with 2,048 particles, averaged across all checkpoints after $75\%$ of training. A one-way ANOVA rejected the null hypothesis that the mean test bounds are equal ($p < 0.0001$), and a post-hoc Dunnett T3 test found the mean SIXO-DRE bound to be the highest ($p < 0.01$ for all pairs).

\subsection{Hodgkin-Huxley Model}
\label{app:sec:experiments:hh}
We provide a brief overview of the model here for completeness, but refer the reader to Chapter 5.6 of \citet{dayan2005theoretical} for more detailed information. 

\subsubsection{Model Details}
The HH model is a physiologically grounded model of neural action potentials~\citep{hodgkin1952quantitative} defined through a set of four nonlinear differential equations.  Each neuron is defined by four state variables: the instantaneous membrane potential $v(t) \in \mathbb{R}$, the potassium channel activation $n(t) \in [0, 1]$, sodium channel activation $m(t) \in [0, 1]$, and sodium channel inactivation $h(t) \in [0, 1]$.  The channel states represent the aggregated probability that the given channel is active.  The state evolves according to:
\begin{equation}
    C_m \frac{\d v(t)}{\d t} = i_{\mathrm{ext}}(t) - g_L (v(t) - E_{L}) - g_K n^4 (v(t) - E_K) - g_{Na} m^3 h (v(t) - E_{Na}). \label{equ:hh:dynamics}
\end{equation}
The membrane capacitance $C_m$ is often defined to be $1.0$.  The first term, $i_\mathrm{ext}(t)$ is the externally injected current.  The second term represents the net current through cell membrane due to the potential difference between the intracellular and extracellular mediums, often referred to as the \emph{leakage current}.  This current is a function of the membrane capacitance, $g_L$, and the potential of the extracellular medium, $E_L$, where the potential difference across the membrane is then $v(t) - E_L$.  The third term represents the net current through the membrane as a result of the potassium channels as a function of the channel state $n(t)$, the channel capacitance $g_K$, and the potassium reversal potential $E_K$.  The final term represents the current through the membrane as a result of the sodium channel states, both $m(t)$ and $h(t)$, the sodium channel conductance $g_{Na}$, and the sodium reversal potential $E_{Na}$.  The channel states evolve according to:
\begin{align}
    \frac{\d n(t)}{\d t} &= \alpha_{n}(v(t))(1-n)- \beta_{n}(v(t))n , \\
    \frac{\d m(t)}{\d t} &= \alpha_{m}(v(t))(1-n)- \beta_{m}(v(t))m , \\
    \frac{\d h(t)}{\d t} &= \alpha_{h}(v(t))(1-n)- \beta_{h}(v(t))h,
\end{align}
where $\alpha_{n}, \alpha_{m}, \alpha_{h}, \beta_{n}, \beta_{m}, \beta_{h}$ are all fixed scalar functions of the membrane potential. We discretize this continuous-time differential equation into a discrete-time latent variable model by integrating using Euler integration with an integration timestep of $0.02$ms (similarly to \citet{huys2009smoothing}). This defines the deterministic time-evolution of the neural state.

We follow a similar approach as \citet{huys2009smoothing} and add Gaussian random noise to each of the four states at each timestep. The membrane potential is unconstrained, and so we can add noise directly. The gate states, however, are constrained to the range $[0, 1]$.  \citet{huys2009smoothing} use truncated Gaussian noise to avoid pushing the state outside the constrained range. We use a different approach and transform the constrained states into an unconstrained state by applying the inverse sigmoid function to the raw gate value. This has the effect of modifying the variance of the perturbation in constrained space as a function of the state (heteroscedastic noise in constrained space). However, this hetereoscedasticity carries with it a favorable intuition. The magnitude of the noise term is reduced (after being pushed through a sigmoid) close the limits. This means that the same noise kernel provides smaller perturbations close to the extremes, while still retaining a larger permissible perturbations in the mid-range. We add zero-mean Gaussian noise to the potential with variance scaled by the integration timestep, $\sigma_v^2 = 9 \mathrm{mVs^{-1}} \times 0.02\mathrm{s} = 0.18\mathrm{mV}$. The unconstrained gate variables are perturbed by zero-mean Gaussian noise with variance also scaled by the integration timestep, $\sigma_{\{n,m,h\}}^2 = 0.1\mathrm{s^{-1}} \times 0.02\mathrm{s} = 0.002$. Observations are sampled from a Gaussian emission distribution centered on the current membrane potential with variance 25mV. Observations are generated every 50 timesteps. The model is integrated with a timestep of 0.02ms, corresponding to an acquisition rate of 1kHz. 

We initialize the potential according to a Gaussian distribution with mean equal to -65mV and a standard deviation of 25mV. The unconstrained gate variables are initialized from a Gaussian distribution with mean defined by an estimate of the steady-state value $x = \mathrm{sigmoid} \left(\nicefrac{\alpha_x (-65)}{ \alpha_x (-65) - \beta_x (-65) }\right)$, where $x$ represents the $n$, $m$ or $h$ states.

To iterate the model, we first constrain the state by passing the gate variables through a sigmoid. The potential is already unconstrained and so requires no transform.  We then iterate the model given the constrained state. The iterated state is then unconstrained by passing the gate states through the logit function (inverse of the sigmoid function). The noise term is then added to the iterated, unconstrained state. Observations are then generated by sampling from the emission distribution every 50 steps. We generate traces with 2,048 timesteps, corresponding to approximately 40ms.

\subsubsection{Results}
For the experiments presented in Section \ref{sec:exp:hh} we use a bootstrap proposal, i.e. $q_{\btheta}(\bx_t \mid \bx_{t-1}, \by_{1:T}) = p_{\btheta}(\bx_t \mid \bx_{t-1})$. This was to focus on learning the twist and show that even learning just a twist can markedly improve inference and model learning outcomes. Individual seeds for each experiment were run using eight CPU cores, 16Gb of memory, and had a runtime time of no longer than two days. 

\paragraph{Inference} In Figure \ref{fig:exp:hh:twisted_smc} we compare the inference performance of a BPF to that of SIXO with a DRE twist. Both models (and hence proposals) are identical. We use $128$ particles in the forward (twisted) SMC sweep. To parameterize the twist we use a recurrent neural network (RNN) with 32 hidden units and a one-layer MLP head with 32 hidden units. This is learned using the DRE update described in Section \ref{sec:methods:dre}. We use minibatched stochastic gradient descent with the ADAM optimizer~\citep{kingma2014adam}, a learning rate of $0.01$, and a minibatch size of 32. We sample 2,048 length $T$ synthetic trajectories from the current model, from minibatches are constructed. To handle missing observations, we apply the RNN to just the observations, and then linearly interpolate the output of the RNN between observations on input to the head MLP. While the twist encoding input to the head MLP is dependent on the previous observation, this approach is still permitted under the twisted SMC framework, and dramatically reduces the cost of application by only processing valid observations, and, allows the twist to create meaningful encodings between observations without introducing more parameters. Investigating alternative twisting architectures for handling missing observations is an interesting avenue of future research.

\paragraph{Model Learning}
For the experiments presented in Figure \ref{fig:hh_results} and Table \ref{tab:hh} in Section \ref{sec:exp:hh} we learn just the constant external current input, $i_{\mathrm{ext}}$ in \eqref{equ:hh:dynamics}. We generate 10,000 training sequences, and evaluate on 30 sequences, both generated with $i_{\mathrm{ext}} = 13\mathrm{mV}$. We show median across 15 random seeds, with shaded regions showing the $10^{th}$ to $90^{th}$ deciles. Across the 15 seeds, we deterministically initialize $i_{\mathrm{ext}}$ to a value uniformly spaced between $1.3\mu \mathrm{A}$ and $37.7\mu \mathrm{A}$, corresponding to a relative error of between $-0.9$ and $1.9$. We report the train bound value as the average train loss once the loss had converged. We evaluate the test bound loss $\mathcal{L}_{\mathrm{BPF}}^{256}$ using a bootstrap particle filter on the held-out validation sequences using 256 particles.

We use $K=4$ particles per SMC sweep when evaluating the model and proposal gradients (c.f. Line 11 of Algorithm \ref{algo:sixo-dre}), and average across four sequences per parameter update. We found that the gradients were prone to shot noise, causing huge jumps in the parameter value and optimizer state. We therefore investigated using gradient clipping. We tested a range of values and found that clipping the model gradient magnitude to 50 eradicated these jumps in SIXO. We also tested clipping with FIVO, and found that while clipping removed the jumps and reduced the variance of the parameters recovered, it increased the bias relative to not using clipping (see Figure \ref{fig:hh_results:theta}). We report the performance of FIVO with and without clipping for fairness. We use the same twist architecture described above. We take 400 steps in the twist (equating to approximately six epochs) per 100 steps in the model.

\end{document}